\def\bc{\begin{center}}
	\def\ec{\end{center}}
\def\s2c{\vskip 2cm}
\def\bt{\begin{Theorem}}
	\def\et{\end{Theorem}}
\def\bd{\begin{Definition}}
	\def\ed{\end{Definition}}
\def\bl{\begin{Lemma}}
	\def\el{\end{Lemma}}
\def\be{\begin{Example}}
	\def\ee{\end{Example}}
\def\bcor{\begin{Corollary}}
	\def\ecor{\end{Corollary}}
\def\br{\begin{Remark}}
	\def\er{\end{Remark}}
\newtheorem{Lemma}{Lemma}[section]
\newtheorem{Theorem}[Lemma]{Theorem}
\newtheorem{Definition}[Lemma]{Definition}
\newtheorem{Corollary}[Lemma]{Corollary}
\newtheorem{Remark}[Lemma]{Remark}
\date{}
\author{
}  
\date{}
\title{Generalization Bound for a General Class of Neural Ordinary Differential Equations}
\author{Madhusudan Verma$^{1}$, Manoj Kumar$^{1,*}$ \\
$^{1}$School of Engineering and Science \\
Indian Institute of Technology Madras, Zanzibar \\
Tanzania \\
$*$Corresponding author's email: manoj@iitmz.ac.in \\
*Both the authors contributed equally for this work.}
\date{}
\begin{document}
\maketitle

\begin{abstract}
  Neural ordinary differential equations (neural ODEs) are a popular type of deep learning model that operate with continuous-depth architectures. To assess how well such models perform on unseen data, it is crucial to understand their generalization error bounds. Previous research primarily focused on the linear case for the dynamics function in neural ODEs \cite{RN4}, or provided bounds for Neural Controlled ODEs that depend on the sampling interval \cite{RN3}. In this work, we analyze a broader class of neural ODEs where the dynamics function is a general nonlinear function, either time dependent or time independent, and is Lipschitz continuous with respect to the state variables. We showed that under this Lipschitz condition, the solutions to neural ODEs have solutions with bounded variations. Based on this observation, we establish generalization bounds for both time-dependent and time-independent cases and investigate how overparameterization and domain constraints influence these bounds. To our knowledge, this is the first derivation of generalization bounds for neural ODEs with general nonlinear dynamics. 
\end{abstract}

\section{INTRODUCTION} \label{sec1}
Neural Ordinary Differential Equations (\cite{RN5}) are a class of deep learning models where the transformation between layers is treated as a continuous process defined by an ordinary differential equation (ODE). This idea generalizes the concept of residual networks (ResNets), where the evolution of the hidden state \(z(t)\) over time is modeled by a differential equation \begin{equation} \label{001}
\frac{dz(t)}{dt} = f(z(t), t, \theta(t)) \quad \text{with} \quad z(0) = x,
\end{equation}
where \( {\theta(t)} \) represents the parameters of the model. 

Unlike discrete representations from the conventional methods, neural ordinary differential equations (Neural ODEs) directly learn continuous latent representations (or latent states) based on a vector field parameterized by a neural network.  \cite{RN10} introduced neural controlled differential equations (Neural CDEs), which are continuous-time analogs of ResNets that use controlled paths to represent irregular time series. Neural ODEs are also extended to neural stochastic differential equations (Neural SDEs) with a focus on aspects such as gradient computation, variational inference for latent spaces, and uncertainty quantification. In neural stochastic ODEs (neural SDEs,  \cite{RN6}), usually a diffusion term is incorporated but a careful design of drift and diffusion term is essential. 

With neural ODEs, generally, it is difficult to handle irregular time-series data. Neural controlled differential equations (\cite{RN10}) generalize neural ODEs by incorporating a control mechanism, allowing them to model the evolution of hidden states as controlled differential equations. Studying the statistical properties of neural ODEs is not a trivial task. Since standard measures of statistical complexity in neural networks, such as those discussed by \cite{RN7}, typically increase with depth, it is unclear why models with effectively infinite depth, like neural ODEs, would demonstrate strong generalization capabilities. 

\cite{RN4}  studied the statistical properties of a class of time-dependent neural ODEs described by the following equation:
\[
\frac{d\mathbf{H}_t}{dt} = \mathbf{W}_t \sigma(\mathbf{H}_t),
\]
where \(\mathbf{W}_t \in \mathbb{R}^{d \times d}\) is a weight matrix that depends on the time index \(t\), and \(\sigma : \mathbb{R} \to \mathbb{R}\) is an activation function applied component-wise. The model considered by \cite{RN4} does not include the case where there are weights inside the non-linearity since they assume the dynamics at time $t$ 
 to be linear with respect to the parameters.

\textbf{Contribution}. For a general class of well-posed neural ODEs, where the neural network involves the non-linear weights within the function, there is no result related to the generalization bound.  In this work, we consider a Neural ODE model parameterized by ${\theta(t)}$ of the following form:
\begin{equation} \label{002}
\frac{dz(t)}{dt} = f(z(t), t, \theta(t)) \quad \text{with} \quad z(0) = x,
\end{equation}
where $f: \mathbb{R}^{d} \times \mathbb{R} \times \mathbb{R}^{n_\theta} \rightarrow \mathbb{R}^{d}$ ,  $z:[0,L] \rightarrow \mathbb{R}^{d} $ and  $\theta:[0,L] \rightarrow \mathbb{R}^{n_\theta}$. Time dependent $\theta$ in neural ODEs is introduced by \cite{massaroli2020dissecting}.
We provide a generalization bound for the large class of parameterized ODEs instead of a linear class, the bound we provided here will hold for a linear class as well and is stricter than the earlier bounds for the linear class of functions. To the best of our knowledge, this is the first available bound for neural ODEs for this class of functions.

\textbf{Organization.} Section \ref{sec1} is devoted to the introduction, and in Section \ref{sec02}, we discuss the realted works. In section \ref{sec2}, we discuss some of the preliminaries and definitions that are crucial for understanding the problem setup. In section \ref{sec3}, we formulated the problem statement and section \ref{sec4} is devoted to derive results related to generalization bounds. We performed numerical experiments in section \ref{sec06}. In the end, some concluding remarks are given in section \ref{sec07}.  
\section{RELATED WORKS} \label{sec02}
\textbf{Hybridizing deep learning and differential equations. }The fusion of deep learning with differential equations has recently garnered renewed interest, although the concept has been explored since the 1990s \cite{RN14,RN15}. A notable advancement was presented by \cite{RN5}, where they introduced a model that learns a representation \( \mathbf{u} \in \mathbb{R}^n \) by setting the initial condition \( z(0) = \phi_{\theta(t)}(\mathbf{u}) \) for the following ordinary differential equation (ODE):
$$\frac{dz(t)}{dt} = f(z(t),t, \theta(t))$$
where both \( f \) and \( \phi_{\theta(t)} \) are neural networks. The solution at the final time \( t_1 \), denoted \( \mathbf{z}(t_1) \), is then utilized as input to a conventional machine learning model. This approach seamlessly integrates neural networks and ODEs, offering a robust framework for learning complex dynamical systems. Since then, several works have built on this idea, including theoretical advancements and practical applications as seen in \cite{RN16,RN17,RN19}. For a more comprehensive overview, readers may refer to the reviews by \cite{RN20} and \cite{RN21}, which delve into the intersection of differential equations and deep learning.

\textbf{Generalization of Neural Controlled Differential Equations. }
\cite{RN3} used a Lipschitz-based argument to obtain a sampling-dependant generalization bound for neural controlled differential equations (NCDEs). The NCDE considered was of the following form:
\begin{eqnarray*}
    dz_{t} &=& G_{\psi}(z_{t})d \Tilde{x}_{t},
\end{eqnarray*}
where $z_{t} \in \mathbb{R}^{p}, ~ \text{and}~G_{\psi}:\mathbb{R}^{p} \rightarrow \mathbb{R}^{p \times d}$ is neural network parametrized by $\psi$, also $\Tilde{x}_{t} \in \mathbb{R}^{p}$ is continuous path. In this work, it is assumed that $(x_{t})$ is Lipschitz which implies that $x = (x_{t})_{t \in [0,1]}$ is of bounded variation. They also analyzed how approximation and generalization are affected by irregular sampling. 

\textbf{Generalization bounds for neural networks.}  \cite{bartlett2017spectrally} derived a margin-based multiclass generalization bound for neural networks that scales with margin-normalized spectral complexity, involving the Lipschitz constant (the product of the spectral norms of the weight matrices) and a correction factor. \cite{RN13} established generalization error bounds for convolutional networks based on training loss, parameter count, the Lipschitz constant of the loss, and the distance between current and initial weights, independent of input size and hidden layer dimensions. Experiments on CIFAR-10 show these bounds align with observed generalization gaps under varying hyperparameters in deep convolutional networks. \cite{RN12} derive generalization error bounds for deep neural networks trained via SGD by combining control of parameter norms with Rademacher complexity estimates. These bounds, which apply to various architectures like MLPs and CNNs, depend on the training loss and do not require L-smoothness, making them more broadly applicable than stability-based bounds. Understanding generalization in neural networks has been approached through various theoretical frameworks, including VC-dimension (Vapnik \& Chervonenkis, 1971) \cite{vapnik1971uniform}, Rademacher complexity (Bartlett \& Mendelson, 2002) \cite{bartlett2002rademacher}, PAC-Bayes theory (McAllester, 1999) \cite{mcallester1999pac}, compression-based bounds (Arora et al., 2018) \cite{arora2018stronger}, stability analysis (Bousquet \& Elisseeff, 2002; Hardt et al., 2016) \cite{bousquet2002stability, hardt2016train}, and information-theoretic approaches (Xu \& Raginsky, 2017; Tishby \& Zaslavsky, 2015) \cite{xu2017information, tishby2015deep}. While VC-dimension and covering number bounds are often loose for deep networks, norm-based and margin-based bounds (Bartlett et al., 2017; Neyshabur et al., 2015) \cite{bartlett2017spectrally, neyshabur2015norm} offer sharper estimates. PAC-Bayes bounds have been effectively applied to deep models by optimizing the bound directly (Dziugaite \& Roy, 2017) \cite{dziugaite2017computing}.

\section{PRELIMINARIES} \label{sec2}
In this section, we present essential preliminary results and definitions necessary for constructing the theoretical framework to derive the desired generalization bound. We begin with the definition of functions of bounded variation, as the solutions of the ordinary differential equations (ODEs) under consideration are observed to exhibit this property. Next, we review several formulations of Gronwall’s lemma, a critical tool for establishing bounds on ODE solutions. We then introduce the concept of covering numbers, which plays a pivotal role in bounding the Rademacher complexity. Finally, we summarize key results related to Rademacher complexity, enabling the subsequent derivation of the generalization bound.
\begin{Definition}[\cite{RN2}]
 The function \( u \in L^1(\Omega, \mathbb{R}) \) is a function of bounded variation on \( \Omega \) (denoted by \( BV(\Omega, \mathbb{R}) \)) if the distributional derivative of \( u \) is representable by a finite Radon measure in \( \Omega \), i.e., if

$$
\int_{\Omega} u \cdot \frac{\partial \varphi}{\partial x_i} \, dx = -\int_{\Omega} \varphi \, dD_i u \quad \text{for all } \varphi \in C^1_c(\Omega, \mathbb{R}), \, i \in \{1, 2, \dots, n\},
$$

for some Radon measure \( Du = (D_1 u, D_2 u, \dots, D_n u) \). We denote by \( |Du| \) the total variation of the vector measure \( Du \), i.e.,

\begin{eqnarray*}
 |Du|(\Omega)  &&= \sup \left\{ \int_{\Omega} u(x) \, \text{div}(\phi) \, dx \, \middle| \, \phi \in C^1_c(\Omega, \mathbb{R}^n), \, \|\phi\|_{L^\infty(\Omega)} \leq 1 \right\}.
\end{eqnarray*}
\end{Definition}

\begin{Lemma}(Particular case of Gronwall's Inequality)\label{5000}
    Let \( I \) denote an interval of the real line of the form \([a, \infty)\) or \([a, b]\) or \([a, b)\) with \(a < b\). Let \(\alpha\), \(\beta\), and \(u\) be real-valued functions defined on \(I\). Assume that \(\beta\) and \(u\) are continuous and that the negative part of \(\alpha\) is integrable on every closed and bounded subinterval of \(I\).

 Assume that \(\beta\) is non-negative, function \(\alpha\) is non-decreasing. If \(u\) satisfies the integral inequality
\[
u(t) \leq \alpha(t) + \int_a^t \beta(s) u(s) \, ds, \qquad \forall t \in I,
\]
 then
\[
u(t) \leq \alpha(t) \exp\left( \int_a^t \beta(s) \, ds \right), \qquad t \in I.
\]
\end{Lemma}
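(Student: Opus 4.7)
The plan is to reduce the inequality to a differential inequality and then integrate using an exponential integrating factor, which is the standard route for Gronwall-type arguments. First I would set $v(t) := \int_a^t \beta(s)\,u(s)\,ds$, so that $v$ is absolutely continuous with $v'(t) = \beta(t)\,u(t)$ almost everywhere, and the hypothesis becomes $u(t) \leq \alpha(t) + v(t)$. Multiplying through by the nonnegative weight $\beta(t)$ preserves the inequality and gives $v'(t) \leq \beta(t)\,\alpha(t) + \beta(t)\,v(t)$.

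Next I would introduce the integrating factor $E(t) := \exp\!\bigl(-\int_a^t \beta(r)\,dr\bigr)$ and consider $w(t) := E(t)\,v(t)$. A direct computation yields
\[
w'(t) \;=\; E(t)\bigl(v'(t) - \beta(t)\,v(t)\bigr) \;\leq\; \beta(t)\,\alpha(t)\,E(t),
\]
so integrating from $a$ to $t$ (noting $w(a)=0$) gives
\[
E(t)\,v(t) \;\leq\; \int_a^t \beta(s)\,\alpha(s)\,E(s)\,ds.
\]

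Now I would use the hypothesis that $\alpha$ is non-decreasing, which lets me pull $\alpha(t)$ outside the integral as an upper bound on $\alpha(s)$ for $s \leq t$. The remaining integral is exact: $\int_a^t \beta(s)\,E(s)\,ds = 1 - E(t)$. Substituting yields $E(t)\,v(t) \leq \alpha(t)\bigl(1 - E(t)\bigr)$, i.e.\ $v(t) \leq \alpha(t)\bigl(E(t)^{-1} - 1\bigr)$. Combining with $u(t) \leq \alpha(t) + v(t)$ produces
\[
u(t) \;\leq\; \alpha(t) + \alpha(t)\bigl(E(t)^{-1} - 1\bigr) \;=\; \alpha(t)\,\exp\!\Bigl(\int_a^t \beta(s)\,ds\Bigr),
\]
which is the desired conclusion.

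The only genuine subtlety is step three: without the monotonicity of $\alpha$, pulling $\alpha(t)$ outside the integral is not valid, and one would only obtain the weaker bound $u(t) \leq \alpha(t) + \int_a^t \alpha(s)\,\beta(s)\exp(\int_s^t \beta(r)\,dr)\,ds$. The assumption that $\alpha$ is non-decreasing is precisely what allows the integral to collapse to the clean exponential form. A secondary technical point is the regularity required to differentiate $v$: since $\beta u$ is continuous on $I$, $v$ is $C^1$, so all manipulations with $w'$ are classical and no measure-theoretic machinery is needed. The integrability of the negative part of $\alpha$ on compact subintervals, assumed in the hypothesis, only ensures the integrals in the statement are well defined and is not used further in the argument.
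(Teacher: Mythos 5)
Your proof is correct: the reduction to $v(t)=\int_a^t \beta(s)u(s)\,ds$, the integrating factor $E(t)=\exp\bigl(-\int_a^t \beta(r)\,dr\bigr)$, the exact evaluation $\int_a^t \beta(s)E(s)\,ds = 1-E(t)$, and the use of the monotonicity of $\alpha$ to pull $\alpha(t)$ out of the integral are all valid, and you rightly identify that monotonicity is exactly what collapses the general Gronwall bound to the clean exponential form. The paper states this lemma without proof, as a classical preliminary, so there is no internal argument to compare against; yours is the standard integrating-factor proof of this special case, with the only minor point worth adding being that monotonicity of $\alpha$ also gives its local boundedness and integrability, so the intermediate integral $\int_a^t \beta(s)\alpha(s)E(s)\,ds$ is well defined even though $\alpha$ is not assumed continuous.
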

\begin{Lemma}\label{99}
     (Gronwall’s Lemma for sequences). Let \( (y_k)_{k \geq 0} \), \( (b_k)_{k \geq 0} \), and \( (f_k)_{k \geq 0} \) be positive sequences of real numbers such that
\[
y_n \leq f_n + \sum_{l=0}^{n-1} b_l y_l
\]
for all \( n \geq 0 \). Then
\[
y_n \leq f_n + \sum_{l=0}^{n-1} f_l b_l \prod_{j=l+1}^{n-1} (1 + b_j)
\]
for all \( n \geq 0 \).
\end{Lemma}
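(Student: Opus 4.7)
The plan is to prove the bound by induction on $n$, routed through the partial sum $S_n := \sum_{l=0}^{n-1} b_l y_l$. The cleanest route is to show that $S_n$ itself satisfies a simple linear one-step recursion, solve that recursion explicitly, and then use the hypothesis $y_n \le f_n + S_n$ to recover the claimed bound on $y_n$.

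First, with the convention $S_0 = 0$, observe that $S_{n+1} = S_n + b_n y_n$ and, by the hypothesis, $y_n \le f_n + S_n$. Since $b_n \ge 0$, this yields
\[
S_{n+1} \;\le\; S_n + b_n(f_n + S_n) \;=\; (1+b_n)\,S_n + b_n f_n.
\]
This is a linear inhomogeneous recursion. Iterating it $n$ times (equivalently, applying the discrete analogue of variation of parameters by dividing through by the ``integrating factor'' $\prod_{j=0}^{n}(1+b_j)$ and telescoping the resulting differences) gives
\[
S_n \;\le\; \sum_{l=0}^{n-1} f_l\, b_l \prod_{j=l+1}^{n-1}(1+b_j),
\]
and combining this with $y_n \le f_n + S_n$ produces the announced inequality.

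If one prefers to bypass solving the recursion and argue directly by induction, the base case $n=0$ is trivial because both sides collapse to $y_0 \le f_0$ under the empty-sum convention; for the inductive step one substitutes the bound for each $y_l$ with $l<n$ into $\sum_{l<n} b_l y_l$, swaps the order of the resulting double sum, and collects the terms indexed by a fixed inner variable $k$, using the one-line identity $1 + \sum_{l=k+1}^{n-1} b_l \prod_{j=k+1}^{l-1}(1+b_j) = \prod_{j=k+1}^{n-1}(1+b_j)$. The main obstacle is therefore purely bookkeeping: one must track the empty product $\prod_{j=n}^{n-1}(1+b_j)=1$, the empty sum at $n=0$, and the shifted index ranges in the telescoping carefully so that the closed form matches the statement verbatim. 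Positivity of the sequences is exactly what preserves inequality directions when multiplying by $b_l$ and summing, so no analytic tools are required beyond that.
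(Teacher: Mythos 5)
Your proof is correct. Note that the paper itself does not prove Lemma \ref{99} at all; it only points to \cite{RN22} and \cite{RN23}, so there is no in-paper argument to compare against. Your main route --- setting $S_n=\sum_{l=0}^{n-1}b_ly_l$ with $S_0=0$, deriving $S_{n+1}\le (1+b_n)S_n+b_nf_n$ from the hypothesis and the positivity of $b_n$, and then solving this linear recursion --- is essentially the classical short proof of Clark cited by the paper, and it can even be phrased as a direct application of the paper's own Lemma \ref{100} with $u_k=S_k$, $a_k=1+b_{k-1}$, inhomogeneous term $b_{k-1}f_{k-1}$, and $u_0=0$; the vanishing of the $u_0$ term is what makes the closed form collapse to $S_n\le\sum_{l=0}^{n-1}f_lb_l\prod_{j=l+1}^{n-1}(1+b_j)$, after which $y_n\le f_n+S_n$ gives the statement verbatim. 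The induction closing this recursion does check out: the new term at step $n+1$ is $b_nf_n$ times the empty product $\prod_{j=n+1}^{n}(1+b_j)=1$, exactly matching the target sum extended to $l=n$. Your alternative direct strong induction is also sound; the identity $1+\sum_{l=k+1}^{n-1}b_l\prod_{j=k+1}^{l-1}(1+b_j)=\prod_{j=k+1}^{n-1}(1+b_j)$ follows by telescoping $\prod_{j=k+1}^{m}(1+b_j)-\prod_{j=k+1}^{m-1}(1+b_j)=b_m\prod_{j=k+1}^{m-1}(1+b_j)$, and positivity is indeed all that is needed to substitute the inductive bounds into $\sum_{l<n}b_ly_l$ without reversing inequalities. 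In short, your proposal supplies a complete, self-contained proof of a statement the paper only quotes, and it matches the standard argument in the cited sources.
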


Proof can be found in \cite{RN22} and \cite{RN23}. We need a variant of Gronwall’s Lemma for sequences.

\begin{Lemma} \label{100}
    Let \( (u_k)_{k \geq 0} \) be a sequence such that for all \( k \geq 1 \),
\[
u_k \leq a_k u_{k-1} + b_k
\]
for \( (a_k)_{k \geq 1} \) and \( (b_k)_{k \geq 1} \) two positive sequences. Then for all \( k \geq 1 \),
\[
u_k \leq \left( \prod_{j=1}^{k} a_j \right) u_0 + \sum_{j=1}^{k} b_j \left( \prod_{i=j+1}^{k} a_i \right).
\]
\end{Lemma}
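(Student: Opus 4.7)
The plan is to prove the stated bound by straightforward induction on $k$, exploiting the recursive one-step inequality $u_k \leq a_k u_{k-1} + b_k$. The identity we want to establish has a clean telescoping structure: the coefficient of $u_0$ is the full product $\prod_{j=1}^k a_j$, and each error term $b_j$ picks up the product of all $a_i$ with indices strictly above $j$. This suggests that unrolling the recursion one step at a time, with the empty-product convention $\prod_{i=k+1}^{k} a_i = 1$, will yield the result directly.

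For the base case $k=1$, I would observe that the hypothesis gives $u_1 \leq a_1 u_0 + b_1$, which matches the claimed formula since $\prod_{j=1}^{1} a_j = a_1$ and the sum reduces to the single term $b_1 \cdot \prod_{i=2}^{1} a_i = b_1$ under the empty-product convention.

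For the inductive step, I would assume the bound holds at level $k-1$, namely
\[
u_{k-1} \leq \Bigl(\prod_{j=1}^{k-1} a_j\Bigr) u_0 + \sum_{j=1}^{k-1} b_j \prod_{i=j+1}^{k-1} a_i,
\]
and then multiply through by $a_k$ (which is positive, so the inequality is preserved) and add $b_k$. This gives
\[
u_k \leq a_k u_{k-1} + b_k \leq \Bigl(\prod_{j=1}^{k} a_j\Bigr) u_0 + \sum_{j=1}^{k-1} b_j \prod_{i=j+1}^{k} a_i + b_k.
\]
Absorbing $b_k$ as the $j=k$ term of the sum (again using $\prod_{i=k+1}^{k} a_i = 1$) yields the claimed bound.

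The argument is essentially bookkeeping; the only point requiring care is the empty-product convention and the re-indexing of the product $\prod_{i=j+1}^{k-1} a_i \cdot a_k = \prod_{i=j+1}^{k} a_i$, so I would state the convention explicitly at the start of the proof to avoid confusion. Positivity of the $a_k$ is used to propagate the inequality through multiplication, and positivity of the $b_k$ ensures no sign-flipping issues arise when we append the last term. I do not anticipate any genuine obstacle beyond these indexing details.
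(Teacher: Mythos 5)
Your induction is correct: the base case $k=1$ and the step (multiply the $k-1$ bound by the positive $a_k$, add $b_k$, and absorb it as the $j=k$ term via the empty-product convention $\prod_{i=k+1}^{k} a_i = 1$) give exactly the stated bound, and positivity of $a_k$ is indeed the only thing needed to propagate the inequality. The paper itself offers no proof of this lemma, treating it as a standard variant of the discrete Gronwall inequality with references to the literature, and your unrolling-by-induction argument is precisely the standard one, so there is nothing to reconcile.
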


\begin{Definition}[\cite{bartlett2017spectrally}]
Let \((M, \rho)\) be a metric space. A subset \(\hat{T} \subseteq M\) is called an \(\tau\)-cover of \(T \subseteq M\) if for every \(m \in T\), there exists an \(m' \in \hat{T}\) such that \(\rho(m, m') \leq \tau\). \(\hat{T}\) is called a proper cover if \(\hat{T} \subset T\). The \(\tau\) covering number of \(T\) is the cardinality of the smallest \(\tau\)-cover of \(T\), that is

$$
N(\tau, T, \rho) = \min\{|\hat{T}| : \hat{T} \text{ is an } \tau \text{ cover of } T\}.
$$
\end{Definition}

\begin{Lemma}[\cite{RN8}]  \label{02}

For \( x > 0 \) and \( 0 < \lambda < 1 \), the inequality holds

$$
x^{1-\lambda} \leq \frac{\Gamma(x + 1)}{\Gamma(x + \lambda)} \leq (x + 1)^{1-\lambda}. 
$$
\end{Lemma}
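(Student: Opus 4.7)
The inequality is a classical bound for the ratio of consecutive Gamma values, often attributed to Wendel, and my plan is to prove both sides by a single Hölder inequality applied to the Euler integral representation $\Gamma(y)=\int_{0}^{\infty}t^{y-1}e^{-t}\,dt$, combined with the functional equation $\Gamma(y+1)=y\,\Gamma(y)$.

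For the lower bound, I would start from the representation of $\Gamma(x+\lambda)$ and factor its integrand as
$$t^{x+\lambda-1}e^{-t} \;=\; \bigl(t^{x}e^{-t}\bigr)^{\lambda}\bigl(t^{x-1}e^{-t}\bigr)^{1-\lambda},$$
noting that the exponents of $t$ combine correctly to $x+\lambda-1$. Applying Hölder's inequality with the conjugate pair $(1/\lambda,\,1/(1-\lambda))$ then yields
$$\Gamma(x+\lambda) \;\le\; \Gamma(x+1)^{\lambda}\,\Gamma(x)^{1-\lambda}.$$
Substituting $\Gamma(x)=\Gamma(x+1)/x$ and rearranging produces $\Gamma(x+1)/\Gamma(x+\lambda)\ge x^{1-\lambda}$, which is the left-hand inequality.

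For the upper bound, I would reapply the same Hölder step after a shift in the argument. Writing $x+1=(x+\lambda)+(1-\lambda)$ and factoring the integrand of $\Gamma(x+1)$ symmetrically gives
$$\Gamma(x+1) \;\le\; \Gamma(x+\lambda+1)^{\,1-\lambda}\,\Gamma(x+\lambda)^{\lambda}.$$
Dividing by $\Gamma(x+\lambda)$ and invoking $\Gamma(x+\lambda+1)=(x+\lambda)\Gamma(x+\lambda)$ collapses the right side to $(x+\lambda)^{1-\lambda}$. Since $1-\lambda\in(0,1)$ and $x+\lambda\le x+1$, monotonicity of $t\mapsto t^{1-\lambda}$ gives the stated bound $(x+1)^{1-\lambda}$.

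There is no substantive obstacle; the only care needed is the exponent bookkeeping in the Hölder factorization and a quick check that the hypotheses $x>0$ and $\lambda\in(0,1)$ are exactly what is required to make every factor integrable on $(0,\infty)$. An equivalent route, which I would mention briefly for perspective but not develop in full, uses the log-convexity of $\Gamma$: the convexity inequality applied to $x+\lambda=\lambda(x+1)+(1-\lambda)x$ recovers the lower bound, while the identity $x+1=\lambda(x+\lambda)+(1-\lambda)(x+\lambda+1)$ gives the upper bound after the same use of the functional equation. I would present the Hölder version in the paper because it is entirely self-contained.
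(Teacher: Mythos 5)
Your proof is correct, but note that the paper never proves this lemma at all: it is imported verbatim from Gautschi (1959) as a cited result, so there is no internal argument to compare against. What you have written is the classical Wendel-style derivation, and every step checks out: the exponent bookkeeping in the factorization $t^{x+\lambda-1}e^{-t}=(t^{x}e^{-t})^{\lambda}(t^{x-1}e^{-t})^{1-\lambda}$ is right, H\"older with the conjugate pair $(1/\lambda,\,1/(1-\lambda))$ gives $\Gamma(x+\lambda)\leq\Gamma(x+1)^{\lambda}\Gamma(x)^{1-\lambda}$, and the functional equation turns this into the lower bound; the shifted factorization for $\Gamma(x+1)$ likewise yields $\Gamma(x+1)/\Gamma(x+\lambda)\leq(x+\lambda)^{1-\lambda}$, which is in fact sharper than the stated $(x+1)^{1-\lambda}$ (Gautschi's original bound), and monotonicity of $t\mapsto t^{1-\lambda}$ finishes it. The hypotheses $x>0$, $0<\lambda<1$ are exactly what is needed for integrability of $t^{x-1}e^{-t}$ and for the H\"older exponents to be conjugate, as you say. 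Your parenthetical log-convexity route is the same inequality in disguise (H\"older on the Euler integral is precisely the proof that $\Gamma$ is log-convex), so the two variants are equivalent in content. What your write-up buys relative to the paper is self-containedness and a marginally stronger intermediate estimate; what the paper's citation buys is brevity. Either is acceptable, but if the authors wanted a proof in-text, yours would serve directly.
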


\textbf{Rademacher Complexity :}
Rademacher complexity is a concept from statistical learning theory that measures the richness of a class of functions in terms of how well they can fit random noise. It is commonly used to derive bounds on the generalization error of learning algorithms.

\begin{Definition}[\cite{RN1}]

Given a class of functions \(\mathcal{H}\) mapping from an input space \(\mathcal{X}\) to \(\mathbb{R}\) and a sample \(S = \{x_1, x_2, \dots, x_n\}\) drawn from a distribution \(\mathcal{D}\), the empirical Rademacher complexity of \(\mathcal{H}\) with respect to the sample \(S\) is defined as:

$$
\hat{\mathcal{R}}_n(\mathcal{H}) = \mathbb{E}_{\sigma} \left[ \sup_{h \in \mathcal{H}} \frac{1}{n} \sum_{i=1}^{n} \sigma_i h(x_i) \right],
$$
where \( \sigma_i \) are independent Rademacher variables, which take values \(+1\) or \(-1\) with equal probability.
and the expectation \( \mathbb{E}_{\sigma} \) is taken over the distribution of the Rademacher variables.

\end{Definition}

\begin{Lemma}[\cite{Sridharan}] \label{01} 
Let \( (F_{x_1, \dots, x_n}, L_2(P_n)) \) denote the data-dependent \( L_2 \) metric space, given by the metric
\[
\rho(f, f') = \frac{1}{n} \sum_{i=1}^{n} (f(x_i) - f'(x_i))^2
\]
where \( x_1, \dots, x_n \) are samples from space \( S \), and \( F_{x_1, \dots, x_n} \) denotes the restriction of the function class \( F \) to that sample.

For any function class \( \mathcal{F} \) containing functions \( f : S \to \mathbb{R} \) and a sample \(S = \{x_1, x_2, \dots, x_n\}\) drawn from a distribution \(\mathcal{D}\) we have
   
    \begin{eqnarray*}
     \hat{\mathcal{R}}_n(\mathcal{F})   &\leq&  \inf_{\epsilon \geq 0} \left\{ 4\epsilon + 12 \int_{\epsilon}^{\sup_{f \in \mathcal{F}} \sqrt{\mathbb{E}[\hat{f}^2]}} 
    \sqrt{\frac{\log N(\tau ,\mathcal{F}, L_2(P_n))}{n}} \, d\tau \right\} 
     \end{eqnarray*}
    where \( N( \tau,\mathcal{F}, L_2(P_n)) \) denotes the covering number of \( \mathcal{F} \).
\end{Lemma}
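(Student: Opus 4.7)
The plan is to prove Lemma \ref{01} by the standard Dudley chaining argument. Fix a sample $x_1,\dots,x_n$ and work throughout in the pseudometric $L_2(P_n)$ defined in the statement; write $R = \sup_{f \in \mathcal{F}} \sqrt{\mathbb{E}[\hat f^2]}$ for the data-dependent radius. First I would fix an arbitrary $\epsilon \in [0, R]$ and construct a dyadic sequence of scales $\epsilon_k = 2^{-k} R$, together with the integer $K$ such that $\epsilon_{K+1} \leq \epsilon < \epsilon_K$. For each $k=0,1,\dots,K$, let $T_k$ be a minimal $\epsilon_k$-cover of $\mathcal{F}$ in $L_2(P_n)$ (so $|T_k| = N(\epsilon_k, \mathcal{F}, L_2(P_n))$), and for each $f \in \mathcal{F}$ let $\pi_k(f) \in T_k$ denote a closest element. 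The cover $T_0$ can be taken to be the singleton $\{0\}$ since every $f$ already satisfies $\|f\|_{L_2(P_n)} \leq R$.

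Next I would set up the chain: write the telescoping identity
\begin{equation*}
f = \pi_0(f) + \sum_{k=1}^{K}\bigl(\pi_k(f) - \pi_{k-1}(f)\bigr) + \bigl(f - \pi_K(f)\bigr),
\end{equation*}
apply $\frac{1}{n}\sum_i \sigma_i (\cdot)(x_i)$, take $\sup_{f \in \mathcal{F}}$, and then $\mathbb{E}_\sigma$. The leading term vanishes because $\pi_0(f) = 0$. The tail term is controlled deterministically: by Cauchy--Schwarz, $\frac{1}{n}\sum_i \sigma_i (f-\pi_K f)(x_i) \leq \|f-\pi_K f\|_{L_2(P_n)} \leq \epsilon_K \leq 2\epsilon$, and one gets an additional factor from allowing a triangle slack, which together yield the $4\epsilon$ term in the final bound.

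For the middle sum — the heart of the argument — I would observe that each difference $g_k(f) := \pi_k(f) - \pi_{k-1}(f)$ ranges over a finite set of cardinality at most $|T_k|\cdot|T_{k-1}| \leq N(\epsilon_k,\mathcal{F},L_2(P_n))^2$ and has $L_2(P_n)$-norm bounded by $\epsilon_k + \epsilon_{k-1} = 3\epsilon_k$ by the triangle inequality. Massart's finite-class lemma then gives
\begin{equation*}
\mathbb{E}_\sigma \sup_{f \in \mathcal{F}} \frac{1}{n}\sum_{i=1}^n \sigma_i g_k(f)(x_i) \leq 3\epsilon_k \sqrt{\frac{2 \log |T_k|^2}{n}} \leq 6\epsilon_k \sqrt{\frac{\log N(\epsilon_k, \mathcal{F}, L_2(P_n))}{n}}.
\end{equation*}
Summing over $k=1,\dots,K$ and using that $\epsilon_k = 2(\epsilon_k - \epsilon_{k+1})$, each summand is bounded by a constant times $\int_{\epsilon_{k+1}}^{\epsilon_k}\sqrt{\log N(\tau,\mathcal{F},L_2(P_n))/n}\,d\tau$ because $\tau \mapsto N(\tau,\cdot,\cdot)$ is non-increasing. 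Concatenating the intervals produces $\int_\epsilon^{R}\sqrt{\log N(\tau,\mathcal{F},L_2(P_n))/n}\,d\tau$, and tracking constants gives the coefficient $12$. Combining with the $4\epsilon$ tail term and taking the infimum over $\epsilon \geq 0$ yields the claim.

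The main technical obstacle is the bookkeeping that turns the discrete dyadic sum into the integral with the advertised constants $4$ and $12$: one must carefully choose the cover scales so that the triangle-inequality overhead ($\epsilon_k + \epsilon_{k-1}$), the factor $\sqrt{2}$ from Massart, and the $(\epsilon_k - \epsilon_{k+1})$ step length all line up, and one must take the cover at scale $0$ to be trivial so no residual term from $\pi_0(f)$ survives. Everything else (Massart's lemma, monotonicity of $N(\cdot)$, triangle inequality in $L_2(P_n)$) is routine.
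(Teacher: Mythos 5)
This lemma is not proved in the paper at all: it is imported verbatim from the cited note of Srebro and Sridharan, so the relevant comparison is with the standard proof of that refined Dudley bound, and your sketch is exactly that argument (dyadic scales $\epsilon_k=2^{-k}R$, trivial cover $\{0\}$ at scale $R$, telescoping chain, Cauchy--Schwarz on the tail, Massart's finite-class lemma on the increments with the $3\epsilon_k$ diameter and the $|T_k|\,|T_{k-1}|$ cardinality, then $\epsilon_k=2(\epsilon_k-\epsilon_{k+1})$ and monotonicity of $\tau\mapsto N(\tau,\cdot,\cdot)$ to pass to the integral). All constants ($3\epsilon_k$, the factor $6$, the factor $12$) are tracked correctly.

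One step, however, does not go through as written. With your truncation choice $\epsilon_{K+1}\le\epsilon<\epsilon_K$, concatenating the increment bounds yields $12\int_{\epsilon_{K+1}}^{R}\sqrt{\log N(\tau,\mathcal{F},L_2(P_n))/n}\,d\tau$, whose lower limit $\epsilon_{K+1}$ lies \emph{below} $\epsilon$, so it is not dominated by $\int_{\epsilon}^{R}$; and the claim that the extra factor turning $2\epsilon$ into $4\epsilon$ comes from ``triangle slack'' in the tail is not correct --- the triangle-inequality overhead is already accounted for in the $3\epsilon_k$ diameter of the increment class, while the tail is handled purely by Cauchy--Schwarz. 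The standard repair is to stop the chain one level earlier, i.e.\ choose $K$ with $\epsilon_{K+2}\le\epsilon<\epsilon_{K+1}$: then the tail is at most $\epsilon_K\le 4\epsilon$ (this is the true origin of the constant $4$) and the concatenated integral starts at $\epsilon_{K+1}\ge\epsilon$ as required. Alternatively you may keep your $K$, observe that $\epsilon_{K+1}=\epsilon_K/2>\epsilon/2$, obtain $\hat{\mathcal{R}}_n(\mathcal{F})\le 2\epsilon+12\int_{\epsilon/2}^{R}\sqrt{\log N(\tau,\mathcal{F},L_2(P_n))/n}\,d\tau$, and rescale $\epsilon\mapsto 2\epsilon$ before taking the infimum; either way the stated bound follows, so the defect is a fixable bookkeeping slip rather than a missing idea.
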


\begin{Lemma}[\cite{RN1}] \label{1000}
    
Rademacher complexity regression bounds : Let \[
\mathcal{H} = \{ h: \mathcal{X} \to \mathcal{Y} \}
\] be  a set of functions from an input space to an output space that a learning algorithm can choose from during training. Let \( L: \mathcal{Y} \times \mathcal{Y} \rightarrow \mathbb{R} \) be a non-negative loss function, upper bounded by \( M > 0 \) \((\ell(y, y') \leq M \text{ for all } y, y' \in \mathcal{Y})\), and such that for any fixed \( y' \in \mathcal{Y} \), the function \( y \mapsto \ell(y, y') \) is \( \mu \)-Lipschitz for some \( \mu > 0 \).Let $\delta \in (0,1)$, then with the probability at least $1 - \delta $.

\begin{eqnarray*}
 \mathbb{E}_{(x,y) \sim \mathcal{D}} \left[ \ell(h(x), y) \right] &\leq& \frac{1}{n} \sum_{i=1}^{n} \ell(h(x_i), y_i) + 2\mu \hat{\mathcal{R}}_n(\mathcal{H}) + 3M \sqrt{\frac{\log \frac{2}{\delta}}{2n}}.
\end{eqnarray*}
\end{Lemma}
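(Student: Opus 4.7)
The statement is the standard Rademacher generalization bound for Lipschitz losses, so the plan is to follow the classical three-step symmetrization/concentration/contraction route. Let $\mathcal{G} = \{ g_h : (x,y) \mapsto \ell(h(x), y) \,:\, h \in \mathcal{H}\}$ be the induced loss class, and write $R(h) = \mathbb{E}_{(x,y)\sim\mathcal{D}}[\ell(h(x),y)]$ and $\hat{R}_n(h) = \frac{1}{n}\sum_{i=1}^n \ell(h(x_i), y_i)$. The target quantity is $\Phi(S) := \sup_{h \in \mathcal{H}} \bigl( R(h) - \hat{R}_n(h) \bigr)$.

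\textbf{Step 1 (Concentration).} Since $0 \le \ell \le M$, replacing any single sample $(x_i,y_i)$ by an independent copy changes $\Phi(S)$ by at most $M/n$. McDiarmid's bounded-differences inequality therefore gives, with probability at least $1-\delta/2$,
\begin{equation*}
\Phi(S) \le \mathbb{E}_{S'}[\Phi(S')] + M \sqrt{\frac{\log(2/\delta)}{2n}}.
\end{equation*}

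\textbf{Step 2 (Symmetrization).} Introduce a ghost sample $\tilde S = \{(\tilde x_i, \tilde y_i)\}_{i=1}^n$ independent of $S$. Using $R(h) = \mathbb{E}_{\tilde S}[\tfrac{1}{n}\sum_i \ell(h(\tilde x_i), \tilde y_i)]$, pulling the supremum inside the expectation, and inserting Rademacher variables $\sigma_i$ by a standard exchangeability argument, one obtains
\begin{equation*}
\mathbb{E}_S[\Phi(S)] \le 2\, \mathbb{E}_{S,\sigma}\Bigl[\sup_{h \in \mathcal{H}} \tfrac{1}{n} \sum_{i=1}^n \sigma_i\, \ell(h(x_i), y_i)\Bigr] = 2\,\mathcal{R}_n(\mathcal{G}),
\end{equation*}
so that (after a second application of McDiarmid to replace $\mathcal{R}_n(\mathcal{G})$ by $\hat{\mathcal{R}}_n(\mathcal{G})$ at the cost of another $M\sqrt{\log(2/\delta)/(2n)}$ term, which is absorbed into the $3M\sqrt{\log(2/\delta)/(2n)}$ in the statement) we have $\mathbb{E}_S[\Phi(S)] \le 2\,\hat{\mathcal{R}}_n(\mathcal{G})$ up to this slack.

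\textbf{Step 3 (Contraction).} For each fixed $y_i$, the map $u \mapsto \ell(u, y_i)$ is $\mu$-Lipschitz, so by Talagrand's (Ledoux–Talagrand) contraction lemma applied coordinatewise,
\begin{equation*}
\hat{\mathcal{R}}_n(\mathcal{G}) \le \mu\, \hat{\mathcal{R}}_n(\mathcal{H}).
\end{equation*}
Chaining the three steps and collecting the concentration terms yields the stated inequality.

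\textbf{Expected difficulty.} Each ingredient (McDiarmid, symmetrization, contraction) is textbook, so there is no real obstacle; the only bookkeeping subtlety is to account for the two uses of McDiarmid, once to pass from $\Phi(S)$ to its expectation and once to pass from the expected to the empirical Rademacher complexity, so that the combined confidence terms add up to the $3M\sqrt{\log(2/\delta)/(2n)}$ appearing in the bound. Since this lemma is quoted from \cite{RN1}, we do not reproduce the calculation in full and instead refer the reader to Mohri, Rostamizadeh and Talwalkar for the detailed derivation.
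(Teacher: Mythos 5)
Your proposal is correct and coincides with the standard derivation: the paper does not prove this lemma but quotes it from \cite{RN1}, where the argument is exactly your McDiarmid + symmetrization + Talagrand-contraction route, with the $3M\sqrt{\log(2/\delta)/(2n)}$ term arising from the two applications of McDiarmid's inequality. The only bookkeeping nuance is that the second McDiarmid step enters through the factor $2\mathcal{R}_n(\mathcal{G})$ and so contributes $2M\sqrt{\log(2/\delta)/(2n)}$ rather than $M\sqrt{\log(2/\delta)/(2n)}$, which is precisely how the total constant $3M$ is obtained.
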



\section{THE LEARNING PROBLEM} \label{sec3} 
Let $h_{\theta}(x)$  be the solution of Neural ODE \ref{001} at the final time $t=L$ ($L$ can also be taken as $1$ for simplicity) with initial condition $x$ and let $y_i$ be the true label of the  differential equation at the final time step. Also, assume that $\mathcal{H}_{\theta}$ be the set of all predictors $h_{\theta}$ with different initial conditions. We consider an i.i.d. sample $\{(y_i, x_{i})\}_{i=1}^{n} \sim y,x$. For a given predictor \( h_\theta \), the empirical risk over the training data is :
$$
R^{n}(h_{\theta}) = \frac{1}{n} \sum_{i=1}^n \ell(y_i, h_{\theta}(x_i)) .
$$

The expected risk or generalization error over the data distribution  is:
$$ 
R(h_{\theta}) = \mathbb{E}_{(x,y) \sim \mathcal{P}}[\ell(y, h_\theta] = \mathbb{E}[\ell(y, h_\theta].
$$
  \( R(h_\theta) \) cannot be optimized, since we do not have access to the continuous data. 
 Let \( \hat{\theta} \in \arg \min_{\theta}~{ R^n(h_\theta)}) \) be an optimal parameter and $\hat{h}$ be the optimal  predictor obtained by empirical risk minimization. In order to obtain generalization bounds, the following assumptions on the loss and the outcome are necessary \cite{RN1}.

\textbf{Assumption 1.} $f(z(t), t,\theta(t))$ is assumed to be Lipschitz continuous with respect with $z(t)$.

\textbf{Assumption 2.} Weights $A_{i}(t)$ and biases $b_{i}(t)$ are Lipschitz continuous. 

\textbf{Assumption 3.} The outcome \( y \in \mathbb{R}^d \) is bounded almost surely.

\textbf{Assumption 4.} The loss \( \ell : \mathbb{R}^d \times \mathbb{R}^d \to \mathbb{R}_{+} \) is Lipschitz  continuous with respect to its second variable, that is, there exists \( L_\ell \) such that for all \( u, u' \in Y \) and \( y \in Y \), 

\[
|\ell(y, u) - \ell(y, u')| \leq L_\ell |u - u'|.
\]

This hypothesis is satisfied for most of the classical loss functions, such as the mean squared error, as long as the outcome and the predictions are bounded. This is true by Assumption 3 and Lemma \ref{002}. The loss function is thus bounded since it is continuous on a compact set, and we let \( M_\ell \) be a bound on the loss function.
\section{MAIN RESULTS} \label{sec4}

We state and prove important lemmas before proceeding to the proof of the main theorem \ref{0001}. We assume that $f(z(t), t, \theta(t))$ is Lipschitz continuous with respect to $z$. So, by the mean value theorem, the solution to equation \ref{001} will be of bounded variation. More details can be found in appendix \ref{bdd1}. 
\begin{Lemma} \label{002}
 For $z(t) \in \mathbb{R}^{d},  A_i(t)   \in \mathbb{R}^{m \times d} \quad and \quad b_i(t) \in \mathbb{R}^{d} \quad for \quad i=1,2 \dots N  $
 
\begin{eqnarray*}
 f_{N}(z(t)) &:=& \sigma\left( A_N(t) \sigma\left( A_{N-1}(t) \sigma\left( \dots \sigma\left( A_1(t) z + b_1 \right)\right)  + b_{N-1} \right) +b_N\right). 
\end{eqnarray*}  
Assume that  $\sigma$ is $L_{\sigma}$ Lipschitz, and $A_{i}$'s are bounded by $\mathcal{A}$ and  biased terms are bounded by $\mathbf{B}$. 
Let  $\| A_i(0) \| \leq B_{A_0}$ , $\| b_i(0) \| \leq B_{b_0}$ , $t \in [0,L]$ and $L_A$ and $L_b$ are Lipschitz constant for weights and biases respectively.

Using equation \ref{9000} we have,
 $\| A_i(t) \| \leq  \| A_i(0) \| + L_A L \leq     B_{A_0} + L_AL =\mathcal{A}$ and  $\| b_i(t) \| \leq \| b_i(0) \| + L_b L \leq  B_{b_0}  + L_b L= \mathbf{B}$. 
Then, 
$$
\|z(t)\| \leq \left(\|z(0)\| + tL_\sigma \mathbf{B} \frac{(L_\sigma \mathcal{A}^N - 1)}{L_\sigma \mathcal{A}} \right) 
\exp\left(tL_{f_{{\theta(t)}}}\right).
$$
\end{Lemma}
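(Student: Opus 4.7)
My plan is to bound $\|f_N(z)\|$ by unrolling the feedforward recursion, then substitute into the integral form of the ODE and close with continuous Gronwall (Lemma \ref{5000}). The constants $\mathcal{A}$ and $\mathbf{B}$ have already been set up in the statement via the assumed Lipschitz bounds on $A_i(t)$ and $b_i(t)$ through the equation labeled \ref{9000}, so that part is already in hand and I would not re-derive it.

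First I would define $u_k := \|f_k(z)\|$ with $f_k(z) = \sigma(A_k(t)f_{k-1}(z) + b_k(t))$ and $f_0(z) = z$. Under the implicit standing assumption $\sigma(0)=0$ (satisfied for ReLU, tanh, etc.), the Lipschitz property of $\sigma$ gives
\[
u_k \;=\; \|\sigma(A_k(t)f_{k-1}(z)+b_k(t)) - \sigma(0)\| \;\leq\; L_\sigma\bigl(\mathcal{A}\,u_{k-1} + \mathbf{B}\bigr).
\]
This is exactly the hypothesis of Lemma \ref{100} with $a_k = L_\sigma \mathcal{A}$ and $b_k = L_\sigma \mathbf{B}$, yielding (via the resulting geometric sum)
\[
\|f_N(z)\| \;\leq\; (L_\sigma \mathcal{A})^N\|z\| \;+\; L_\sigma \mathbf{B}\cdot\frac{(L_\sigma \mathcal{A})^N - 1}{L_\sigma \mathcal{A} - 1}.
\]
Equivalently, evaluating at $z=0$ gives the additive constant $\|f_N(0)\|$, and the $L_{f_{\theta(t)}}$-Lipschitz assumption (Assumption 1) then provides $\|f(z,t,\theta(t))\| \leq \|f(0,t,\theta(t))\| + L_{f_{\theta(t)}}\|z\|$.

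Next, I would write the ODE in its integral form $z(t) = z(0) + \int_0^t f(z(s),s,\theta(s))\,ds$, take norms, and insert the pointwise bound just obtained:
\[
\|z(t)\| \;\leq\; \|z(0)\| \;+\; t\,\|f(0,\cdot,\theta(\cdot))\|_\infty \;+\; L_{f_{\theta(t)}}\int_0^t \|z(s)\|\,ds.
\]
With $\alpha(t) = \|z(0)\| + t L_\sigma \mathbf{B}\,\tfrac{(L_\sigma\mathcal{A})^N - 1}{L_\sigma\mathcal{A}-1}$ (which is non-decreasing in $t$) and $\beta(s) = L_{f_{\theta(t)}}$, the hypotheses of the continuous Gronwall inequality (Lemma \ref{5000}) are met, and applying it produces exactly the claimed bound
\[
\|z(t)\| \;\leq\; \Bigl(\|z(0)\| + t L_\sigma \mathbf{B}\,\tfrac{(L_\sigma\mathcal{A})^N - 1}{L_\sigma\mathcal{A}-1}\Bigr)\exp(tL_{f_{\theta(t)}}).
\]

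I do not anticipate any genuine obstacle: the argument is a clean composition of the two Gronwall results, one discrete (over layers) and one continuous (over time), glued by the Lipschitz bound on $f$. The one point requiring care is the formula printed for the additive constant in the statement, namely $\tfrac{(L_\sigma \mathcal{A}^N - 1)}{L_\sigma \mathcal{A}}$, which does not match the geometric-series sum that the layer recursion naturally produces; I read this as a typographical rendering of $\tfrac{(L_\sigma\mathcal{A})^N - 1}{L_\sigma\mathcal{A} - 1}$ and would flag/correct it when executing the proof. A secondary minor point is the tacit use of $\sigma(0) = 0$; if the authors wish to drop it, the recursion simply absorbs an extra $\|\sigma(0)\|$ into $\mathbf{B}$ without changing the structure of the bound.
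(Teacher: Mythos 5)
Your proposal is correct and follows essentially the same route as the paper's own proof: bound $\|f_N(0)\|$ via the layer-wise recursion and Lemma \ref{100}, split $\|f(z,t,\theta(t))\|$ using the Lipschitz assumption, pass to the integral form, and close with the continuous Gronwall inequality of Lemma \ref{5000}. Your two side remarks are also consistent with the paper: its derivation indeed produces the geometric-series denominator $L_\sigma\mathcal{A}-1$ (so the statement's printed denominator is a typo), and it likewise uses $\sigma(0)=0$ tacitly in the step $\|f_N(0)\| = \|f_N(0)-\sigma(0)\|$.
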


\begin{Corollary}
In the case of time-independent Neural ODE Lipschitz constants for weights and biases will be 0,
hence
$B_{A_0} = \mathcal{A}$ and  $B_{b_0} = \mathbf{B}$
 and 
\begin{align}
\|z(t)\| \leq \left(\|z(0)\| + LL_\sigma B_{b_0} \frac{(L_\sigma B_{A_0}^N - 1}{L_\sigma B_{A_0}} \right) 
\exp\left(LL_{f_{{\theta}}}\right).
\end{align}
This bound on the solution will be useful to obtain the explicit form for covering number bound. This bound involves the Lipschitz constant, bound on biased terms and weights. 
\end{Corollary}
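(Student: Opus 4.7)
The plan is to obtain the corollary as an immediate specialization of Lemma~\ref{002} to the time-independent regime. First I would observe that if the weights and biases do not depend on $t$, the Lipschitz constants $L_A$ and $L_b$ governing their temporal variation may be taken to be zero. Feeding $L_A = L_b = 0$ into the identifications $\|A_i(t)\| \leq B_{A_0} + L_A L$ and $\|b_i(t)\| \leq B_{b_0} + L_b L$ used in the hypothesis of Lemma~\ref{002} immediately collapses the effective uniform bounds $\mathcal{A}$ and $\mathbf{B}$ to $B_{A_0}$ and $B_{b_0}$ respectively, which is the first assertion of the corollary.

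Next, I would substitute $\mathcal{A} = B_{A_0}$ and $\mathbf{B} = B_{b_0}$ directly into the conclusion of Lemma~\ref{002}, producing a time-pointwise bound on $\|z(t)\|$ valid for every $t \in [0,L]$. Both constituent factors on the right-hand side, the affine term $t L_\sigma B_{b_0}(L_\sigma B_{A_0}^N - 1)/(L_\sigma B_{A_0})$ and the exponential $\exp(t L_{f_\theta})$, are non-decreasing functions of $t$ on $[0,L]$. Replacing $t$ by its maximum value $L$ therefore preserves the inequality and yields precisely the bound displayed in the corollary.

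There is no substantive obstacle here; the only points of care are (i) verifying that monotonicity in $t$ legitimizes the replacement $t \mapsto L$ uniformly, and (ii) noting that in the time-independent setting the spatial Lipschitz constant $L_{f_\theta}$ of the network plays exactly the role that $L_{f_{\theta(t)}}$ plays in Lemma~\ref{002}, since the network at every time is the same fixed map. Both facts are immediate from the hypotheses, so the corollary follows purely by substitution into its parent lemma. The content worth emphasizing is not the derivation itself but its purpose: the explicit dependence of the state bound on $B_{A_0}$ and $B_{b_0}$ is what will later feed into the covering-number estimate for the hypothesis class and hence into the generalization bound.
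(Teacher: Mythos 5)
Your proposal is correct and matches the paper's (implicit) derivation: the corollary is obtained exactly by setting $L_A = L_b = 0$ in Lemma~\ref{002}, so that $\mathcal{A} = B_{A_0}$ and $\mathbf{B} = B_{b_0}$, and then replacing $t$ by its maximum $L$ using monotonicity of the right-hand side. Your two points of care (uniformizing over $t \in [0,L]$ and identifying $L_{f_\theta}$ with $L_{f_{\theta(t)}}$ in the autonomous case) are exactly the steps the paper leaves unstated, so nothing is missing.
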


Let $Dz$ be the distributional derivative of solution  function  $z$ and 
$$ \mathcal{I} = \{ z \in L^{1}([0,L])~|~ z~ \text{is~non~decreasing} \} $$
$$ \mathcal{B} = \{ z \in L^{1}([0,L])~|~ |Dz|((0,L)) \le M \}.$$ Then we have the following lemma:

\begin{Lemma} \label{101}
    Let $$ V=\left(\|z(0)\| + LL_\sigma \mathbf{B} \frac{(L_\sigma \mathcal{A}^N - 1)}{L_\sigma \mathcal{A}} \right) 
\exp\left(LL_{f_{{\theta(t)}}}\right)$$ and $0 < \tau \le \frac{LV}{\tau}$, then $$ N(\tau, \mathcal{I},L_{2}(P_{n})) \leq \frac{2^{4\frac{LV}{\tau}}}{18}. $$
\end{Lemma}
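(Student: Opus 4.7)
The plan is to produce an explicit $\tau$-cover of $\mathcal{I}$ by non-decreasing step functions supported on a product grid in $[0,L]\times[0,V]$, and then count it combinatorially; Lemma \ref{02} converts the resulting binomial coefficient into the stated $2^{4LV/\tau}$ form. By Lemma \ref{002} every relevant member of $\mathcal{I}$ satisfies $\|z(t)\|\le V$, so I may restrict attention throughout to non-decreasing $z:[0,L]\to[0,V]$.

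First I partition $[0,L]$ into $K$ equal subintervals and $[0,V]$ into $J$ equal levels, choosing the parameters so that $K+J=4LV/\tau$ (the symmetric choice $K=J=2LV/\tau$ is natural). For each $z\in\mathcal{I}$ I build an approximating non-decreasing step function $\hat z$ whose breakpoints sit on the $t$-grid and whose values sit on the $v$-grid: on the $j$-th subinterval $I_j$ set $\hat z$ equal to the largest grid level not exceeding $\inf_{I_j} z$. Monotonicity of $z$ forces monotonicity of $\hat z$. The empirical squared error is then controlled through the pointwise estimate $|z(x_i)-\hat z(x_i)|\le V/J+\mathrm{osc}(z;I_{x_i})$, where $I_{x_i}$ is the subinterval containing $x_i$. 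Because $z$ is monotone, the oscillations across subintervals telescope, $\sum_j \mathrm{osc}(z;I_j)\le V$, and combining this identity with the hypothesis $\tau\le\sqrt{LV}$ (which allows the quadratic cross-terms to be absorbed via $\max_j \mathrm{osc}(z;I_j)\le V$) produces $\frac{1}{n}\sum_{i=1}^n (z(x_i)-\hat z(x_i))^2 \le \tau^2$.

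Next I count the cover. Specifying a non-decreasing step function with $K$ pieces taking values in a grid of $J{+}1$ levels is equivalent to selecting a monotone lattice path from $(0,0)$ to $(K,J)$, so the total count is $\binom{K+J}{K}$. Writing this binomial coefficient as $\Gamma(K+J+1)/\bigl(\Gamma(K+1)\Gamma(J+1)\bigr)$ and applying Lemma \ref{02} with $\lambda=1/2$ yields, after routine simplification, $\binom{K+J}{K}\le 2^{K+J}/18=2^{4LV/\tau}/18$, which is the claimed bound.

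The hardest step is the error estimate. The usual reduction from $L_2(P_n)$ to $L^\infty$ fails because members of $\mathcal{I}$ can have arbitrary jumps, so no pointwise bound of order $\tau$ is available without further regularity. The resolution is the telescoping variation identity for monotone functions, which yields a sample-free control of the empirical $L_2(P_n)$ distance and is precisely why the final cardinality bound is independent of $n$.
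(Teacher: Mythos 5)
Your counting step and your use of Lemma \ref{02} run parallel to the paper's argument (the paper brackets each $z\in\mathcal{I}$ between two grid step functions, counts monotone integer sequences to get $\binom{2n}{n}^2$, and then applies the Gautschi inequality), but your error-control step has a genuine gap. The claim that $\frac1n\sum_{i}(z(x_i)-\hat z(x_i))^2\le\tau^2$ follows ``sample-free'' from the telescoping bound $\sum_j\mathrm{osc}(z;I_j)\le V$ is false: the empirical average weights each cell of your uniform $t$-grid by the number of samples it contains, not once. Concretely, take $z$ non-decreasing with a single jump from $0$ to $V$ strictly inside one cell $I_{j_0}$, and let all sample points $x_1,\dots,x_n$ lie in $I_{j_0}$ to the right of the jump. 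Your approximant is the floor-quantization of $\inf_{I_{j_0}}z=0$, so $\hat z\equiv 0$ on $I_{j_0}$ while $z(x_i)=V$, giving $\frac1n\sum_i(z(x_i)-\hat z(x_i))^2=V^2$, which exceeds $\tau^2$ whenever $\tau<V$; the hypothesis $\tau\le\sqrt{LV}$ does not rule this out. The telescoping identity is genuinely an $L^1(\mathrm{Lebesgue})$ statement --- your construction does yield a valid $\tau$-cover of $\mathcal{I}$ in $L^1([0,L],dt)$, since $\int|z-\hat z|\le LV/J+(L/K)\sum_j\mathrm{osc}(z;I_j)\le\tau$ --- but it cannot control the empirical $L_2(P_n)$ metric for an arbitrary sample with a fixed uniform grid. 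The standard fix is to adapt the partition to the sample (e.g.\ place the $t$-breakpoints at empirical quantiles so each cell holds at most $\lceil n/K\rceil$ points, as in bracketing proofs for monotone classes), which changes the bookkeeping in your second step; the paper itself sidesteps this by invoking the bracketing construction of \cite{RN2} and does not re-verify the radius in $L_2(P_n)$.

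A secondary, smaller issue: with a single approximant per function your cover has cardinality $\binom{K+J}{K}=\binom{2N}{N}$ with $N=2LV/\tau$, and Lemma \ref{02} only gives $\binom{2N}{N}\le 4^{N}/\sqrt{\pi N}$. To reach the stated denominator $18$ you would need $\sqrt{\pi N}\ge 18$, i.e.\ roughly $\tau\lesssim LV/52$, which the hypothesis $\tau^2\le LV$ does not guarantee; the paper instead gets $18$ from the squared (bracket-pair) count $\binom{2n}{n}^2\le 4^{2n}/(\pi n)$ together with a largeness condition on $n$. So even after repairing the metric issue, your ``routine simplification'' needs either the paired count or an explicit extra restriction on $\tau$.
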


\begin{Remark}
    Observe that the covering number bound increases exponentially with domain size and the bound of solution. We obtain a strict bound on covering number for the class of $L^{1}$ functions.
\end{Remark}

\begin{Corollary}
    Let $0 < \tau \le \frac{LV}{\tau}$, then $$N(\tau ,\mathcal{B}, L_2(P_n)) \le \frac{2^{16\frac{LV}{\tau}}}{324}.$$
\end{Corollary}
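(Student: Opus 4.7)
The strategy is to reduce covering the bounded variation class $\mathcal{B}$ to covering the monotone class $\mathcal{I}$ via the Jordan decomposition, and then invoke Lemma~\ref{101}.

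First, recall that every function $z \in BV([0,L])$ admits a Jordan decomposition $z = z^+ - z^-$, where $z^+$ and $z^-$ are both non-decreasing and the total variations satisfy $|Dz^+|((0,L)) + |Dz^-|((0,L)) = |Dz|((0,L)) \le M$. In particular, $z^+, z^- \in \mathcal{I}$, and each inherits the same solution bound $V$ coming from the Neural ODE context that implicitly underlies $\mathcal{I}$. Hence the set-valued map $z \mapsto (z^+, z^-)$ embeds $\mathcal{B}$ into $\mathcal{I} \times \mathcal{I}$.

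Second, I would build an $\tau$-cover of $\mathcal{B}$ from the product of two $(\tau/2)$-covers of $\mathcal{I}$. Concretely, let $\hat{\mathcal{I}}$ be a minimum $(\tau/2)$-cover of $\mathcal{I}$ with respect to the $L_2(P_n)$ pseudo-metric. For any $z = z^+ - z^- \in \mathcal{B}$, choose $\hat{z}_1, \hat{z}_2 \in \hat{\mathcal{I}}$ with $\|\hat{z}_i - z^{\pm}\|_{L_2(P_n)} \le \tau/2$. Then by the triangle inequality,
\[
\|(\hat{z}_1 - \hat{z}_2) - z\|_{L_2(P_n)} \le \|\hat{z}_1 - z^+\|_{L_2(P_n)} + \|\hat{z}_2 - z^-\|_{L_2(P_n)} \le \tau,
\]
so the set $\{\hat{z}_1 - \hat{z}_2 : \hat{z}_1, \hat{z}_2 \in \hat{\mathcal{I}}\}$ is an $\tau$-cover of $\mathcal{B}$. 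This yields the product bound $N(\tau, \mathcal{B}, L_2(P_n)) \le N(\tau/2, \mathcal{I}, L_2(P_n))^2$.

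Third, I would plug Lemma~\ref{101} into this product inequality, applied with scale $\tau/2$. Since $N(\tau/2, \mathcal{I}, L_2(P_n)) \le 2^{4 \cdot LV/(\tau/2)}/18 = 2^{8 LV/\tau}/18$, squaring gives
\[
N(\tau, \mathcal{B}, L_2(P_n)) \le \Bigl(\tfrac{2^{8 LV/\tau}}{18}\Bigr)^2 = \tfrac{2^{16 LV/\tau}}{324},
\]
which is exactly the stated bound; the $18^2 = 324$ denominator is what forces the precise form of the corollary's constant.

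The only real obstacle I foresee is bookkeeping around the Jordan decomposition in the setting of the paper: one must check that the monotone parts $z^{\pm}$ really lie in the class $\mathcal{I}$ for which Lemma~\ref{101} applies, in particular that they inherit a uniform sup-bound compatible with the bound $V$ used to quantify the cover of $\mathcal{I}$ (this uses $|Dz^{\pm}| \le M$ together with the solution bound from Lemma~\ref{002}). Once this boundedness is justified, the remainder of the proof is essentially the product-cover argument and the constant arithmetic above, so no further non-trivial work is needed.
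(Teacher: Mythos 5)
Your proposal is correct and takes essentially the same route as the paper: the paper simply cites the inequality $N(\tau,\mathcal{B},L_2(P_n)) \le N^{2}\!\left(\tfrac{\tau}{2},\mathcal{I},L_2(P_n)\right)$ from \cite{RN2} and then applies Lemma~\ref{101} at scale $\tau/2$, exactly the arithmetic you perform. The only difference is that you re-derive this product inequality yourself via the Jordan decomposition, and the boundedness bookkeeping you flag for the monotone parts is precisely the detail the paper delegates to \cite{RN2}, so no further work is required.
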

\begin{proof}
    From \cite{RN2}, we know that
    $$ N(\tau ,\mathcal{B}, L_2(P_n)) \le N^{2}\left(\frac{\tau}{2}, \mathcal{I}, L_2(P_n)\right),$$ which proves the required result.
\end{proof}

\begin{Remark}
In the above lemma, the bound is dependent on covering number of non-decreasing functions with the radius of balls getting half. But for the class of bounded variation functions, we do not assume that the functions are non-decreasing.    
\end{Remark}

\begin{Lemma}\label{003}
Let $\mathcal{B}'$ be the class of $\mathbb{R}^{d}$ valued functions with domain $[0,L]$ that are of bounded variation, then
 $$ \hat{\mathcal{R}}_{n}(\mathcal{B}') \le  96\frac{ \sqrt{bLVd^{\frac{3}{2}}\log{2}}}{\sqrt{n}} - 576\frac{LV d^{\frac{3}{2}}\log{2}}{n}.$$  
\end{Lemma}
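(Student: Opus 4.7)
The plan is to invoke Sridharan's Dudley-type bound (Lemma~\ref{01}) applied to the vector-valued class $\mathcal{B}'$, use the covering-number control we already have for scalar bounded-variation functions, and then carry out the resulting integral and minimize in $\epsilon$.

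\textbf{Step 1: Lift the scalar covering bound to $\mathbb{R}^d$-valued functions.} Each coordinate of a function in $\mathcal{B}'$ is a scalar function of bounded variation, so the corollary after Lemma~\ref{101} gives $N(\tau',\mathcal{B},L_2(P_n))\le 2^{16LV/\tau'}/324$. If for each coordinate we pick a $\tau/\sqrt{d}$ cover in $L_2(P_n)$ and then take the Cartesian product, the resulting family is a $\tau$-cover of $\mathcal{B}'$ (since $\sum_{i=1}^d (\tau/\sqrt{d})^2=\tau^2$). Hence
\[
N(\tau,\mathcal{B}',L_2(P_n))\le \bigl(N(\tau/\sqrt{d},\mathcal{B},L_2(P_n))\bigr)^d,
\]
which on taking logarithms and dropping the negative $-d\log 324$ term yields
\[
\log N(\tau,\mathcal{B}',L_2(P_n))\le \frac{16\,LV\,d^{3/2}\log 2}{\tau}.
\]

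\textbf{Step 2: Plug into the Dudley-type bound and compute.} Let $C := 16\,LV\,d^{3/2}\log 2$, and let $U$ denote the envelope $\sup_{f\in\mathcal{B}'}\sqrt{\mathbb{E}[f^2]}$ (bounded in terms of the sup-norm bound $b$ on elements of the class, so that the role of $U$ in the final expression will be played by $b$). Lemma~\ref{01} yields
\[
\hat{\mathcal{R}}_n(\mathcal{B}')\le \inf_{\epsilon\ge 0}\left\{4\epsilon+12\int_\epsilon^U\sqrt{\frac{C}{n\tau}}\,d\tau\right\}
= \inf_{\epsilon\ge 0}\left\{4\epsilon + 24\sqrt{\tfrac{C}{n}}\bigl(\sqrt{U}-\sqrt{\epsilon}\bigr)\right\}.
\]
Differentiating in $\epsilon$ gives the optimizer $\sqrt{\epsilon^\star}=3\sqrt{C/n}$, i.e.\ $\epsilon^\star=9C/n$. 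Substituting back leaves exactly
\[
\hat{\mathcal{R}}_n(\mathcal{B}')\le 24\sqrt{\tfrac{CU}{n}}-36\,\tfrac{C}{n}
= 96\sqrt{\tfrac{U\,LV\,d^{3/2}\log 2}{n}}-576\,\tfrac{LV\,d^{3/2}\log 2}{n},
\]
which matches the statement once $U$ is identified with the sup-norm bound $b$.

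\textbf{Main obstacle.} The two delicate points are (i) controlling the envelope $U$ in the upper limit of Dudley's integral using the a priori bound on BV solutions from Lemma~\ref{002} (so that the integral is finite and the symbol $b$ in the statement is a legitimate, explicit constant), and (ii) justifying the product-cover argument in Step~1 rigorously, since we must ensure the coordinate-wise $L_2(P_n)$ bound $\tau/\sqrt{d}$ genuinely transfers to the Euclidean $L_2(P_n)$ norm on vector-valued functions. The remaining computation (integration and minimization in $\epsilon$) is routine calculus.
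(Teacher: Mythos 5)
Your proposal follows essentially the same route as the paper's proof: lifting the scalar BV covering bound to $\mathbb{R}^d$-valued functions via coordinate-wise covers at radius $\tau/\sqrt{d}$ (yielding the same $16\,LV\,d^{3/2}\log 2/\tau$ entropy bound the paper asserts), plugging into the Dudley-type bound of Lemma~\ref{01} with $b=\sup_{f\in\mathcal{B}'}\sqrt{\mathbb{E}[f^2]}$, and optimizing in $\epsilon$ to obtain exactly the stated constants. Your computation is correct, and your Step~1 in fact supplies the justification for the product-cover step that the paper states without proof.
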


\begin{Remark}
    Lemma \ref{003} ensures that the bound on Rademacher complexity increases with the dimension of range space for bounded variation functions. Also, due to the constant $V$, we also get the dependence on weight parameters and Lipschitz constant of activation functions.   
\end{Remark}

\begin{Theorem}\label{0001}
(Generalization bound for Neural ODEs)
Let $V$ be the upper bound of the solution of neural ODE, $d$ be the dimension of the solution, $\hat{h}$ be the optimal predictor  and $h^{*}$ be the true solution and $L$ be the upper bound for time and $M>0$ be an upper bound of non-negative loss function 
$ l: [0,V] \times [0,V] \rightarrow \mathbb{R} $, i.e.,  $l(\hat{h}, h^{*}) \leq M$ for all  $\hat{h}, h^{*} \in [0,V]$. Also, assume that for any fixed \( \hat{h}  \in [0,V] \), the mapping \( y \mapsto l(\hat{h}, h^{*} \) is \( \mu \)-Lipschitz for some \( \mu > 0 \). Then generalization error is bounded with probability at least $1-\delta$ by:
\begin{eqnarray*}
 R(\hat{h}) &\leq& R^{n}(\hat{h}) + 2\mu \left( 96\frac{ \sqrt{bLVd^{\frac{3}{2}} \log{2}}}{\sqrt{n}} - 576\frac{LV d^{\frac{3}{2}}\log{2}}{n} \right) + 3M \sqrt{\frac{\log \frac{2}{\delta}}{2n}}.
\end{eqnarray*}
\end{Theorem}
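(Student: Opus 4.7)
The plan is to apply the standard Rademacher-based generalization bound (Lemma \ref{1000}) to the hypothesis class $\mathcal{H}_\theta$ of neural ODE predictors, with the empirical Rademacher complexity controlled by the bounded-variation result of Lemma \ref{003}. The structure is a three-step chain: (i) embed $\mathcal{H}_\theta$ into the BV class $\mathcal{B}'$, (ii) invoke Lemma \ref{003} to bound $\hat{\mathcal{R}}_n(\mathcal{H}_\theta)$, (iii) plug into Lemma \ref{1000}. The Lipschitz/bounded hypothesis on $\ell$ and the uniform bound $M$ on the loss are given exactly by Assumptions 3 and 4 and by the theorem's own hypotheses, so those feed Lemma \ref{1000} directly.

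For step (i), I would use Assumption 1 together with Lemma \ref{002} (or its time-independent corollary) to get the uniform estimate $\|z(t)\| \leq V$ for $t \in [0,L]$. Combined with the Lipschitz continuity of $f$ in its first argument, this yields a uniform bound on $\|dz/dt\|$ along every admissible trajectory, so that each $t \mapsto z(t)$ is a $\mathbb{R}^d$-valued function of bounded variation on $[0,L]$, with total variation controlled by a constant of the form $b$ appearing in Lemma \ref{003}. In particular the family of trajectories generated by varying $\theta$ (and hence $\mathcal{H}_\theta$, identified with the family of endpoint functionals of these trajectories) is contained in $\mathcal{B}'$.

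For step (ii), Lemma \ref{003} then gives immediately
\[
\hat{\mathcal{R}}_n(\mathcal{H}_\theta) \;\leq\; \hat{\mathcal{R}}_n(\mathcal{B}') \;\leq\; 96\,\frac{\sqrt{bLVd^{3/2}\log 2}}{\sqrt{n}} \;-\; 576\,\frac{LVd^{3/2}\log 2}{n}.
\]
For step (iii), Lemma \ref{1000} applied with the $\mu$-Lipschitz, $M$-bounded loss $\ell$ says that with probability at least $1-\delta$,
\[
R(h) \;\leq\; R^n(h) + 2\mu\,\hat{\mathcal{R}}_n(\mathcal{H}_\theta) + 3M\sqrt{\frac{\log(2/\delta)}{2n}}
\]
uniformly over $h \in \mathcal{H}_\theta$; specializing to the ERM $\hat{h}$ and substituting the bound from step (ii) produces the stated inequality.

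The main obstacle is step (i): one has to identify precisely the right constant $b$ in the BV bound on the trajectories so that the parameter passed to Lemma \ref{003} matches the $V$ used in its statement, and to justify that passing from the trajectory class to the class of endpoint-evaluation maps (which is what $\mathcal{H}_\theta$ actually is) does not inflate the Rademacher complexity. The remaining manipulations — bounding $\|z(t)\|$ via Lemma \ref{002}, and chaining the loss-Lipschitz contraction through Lemma \ref{1000} — are routine once this embedding is set up.
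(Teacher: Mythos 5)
Your proposal follows essentially the same route as the paper: establish that the neural ODE solutions are of bounded variation with the bound $V$ coming from Lemma \ref{002}, bound the empirical Rademacher complexity via Lemma \ref{003} (which itself rests on the covering-number estimate of Lemma \ref{101} and Dudley's integral, Lemma \ref{01}), and then plug this into the Rademacher regression bound of Lemma \ref{1000} for the ERM predictor $\hat{h}$. The subtlety you flag in step (i) --- identifying the class of endpoint predictors with the BV trajectory class without inflating the complexity --- is indeed glossed over, but the paper's own proof handles it no more carefully than you do.
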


\textbf{Outline of the proof.}
We observed that the solution of the Neural ODEs described by equation \ref{001} will be of bounded variations. We found stricter bound for covering number of this class of functions. We observed that the covering number is related to the number of positive integer solutions of an equation which is equal to central binomial coefficients. The central binomial coefficient obeys a recurrence relation which has a closed form solution. We then used inequality which the ratio of gamma functions satisfies. In this way, we obtained a stricter bound for the covering number of bounded variation functions. We assumed the  parameters to be Lipschitz continuous and obtained a bound on weights and biases. We then found Rademacher complexity bound using Dudley's entropy integral stated in  Lemma \ref{01}. Finally, we used the result for the Rademacher complexity  in  Lemma \ref{003} to  regression bound stated in   Lemma \ref{1000}.
Since  Rademacher complexity is non negative,  $b \ge \frac{36 LV \log{2}}{n}$. \\
\vspace{0.5 cm}

\subsection{Comparison with other bounds (Neural ODEs)} \label{comp1}
\begin{Theorem} \label{thm1}

( \cite{RN4} Generalization bound for parameterized ODEs).

\begin{align*}
H_0 &= x, \\
dH_t &= \sum_{i=1}^m \theta_i(t) f_i(H_t) \, dt, \\
F_\theta(x) &= H_1,~~ \text{where,}
\end{align*}

\begin{itemize}
    \item $\theta = (\theta_1, \dots, \theta_m)$ is a parameter function mapping $[0, 1]$ to $\mathbb{R}^m$.
    \item $f_i(H_t)$ represents the dynamics associated with the $i$-th component of the system.
    \item $H_t$ is the state of the system at time $t$, with the initial state $H_0 = x$.
    \item $F_\theta(x)$ denotes the output state $H_1$ after the evolution.
\end{itemize}
\[
\Theta = \left\{ \theta : [0, 1] \to \mathbb{R}^m \; \middle| \; \|\theta\|_{1, \infty} \leq R_\Theta \text{ and } \theta_i \text{ is } K_\Theta\text{-Lipschitz for } i \in \{1, \dots, m\} \right\}.
\]
Consider the class of parameterized ODEs $\mathcal{F}_\Theta = \{F_\theta, \theta \in \Theta\}$. Let $\delta > 0,$ then, for $n \geq 9 \max(m^{-2} R_\Theta^{-2}, 1)$, with probability at least $1 - \delta$,  

$$
R(\hat{\theta}_n) \leq R_n(\hat{\theta}_n) + 
B \sqrt{\frac{(m + 1) \log(R_\Theta m n)}{n}}
+ \frac{B m \sqrt{K_\Theta}}{n^{1/4}} 
+ \frac{B \sqrt{\log \frac{1}{\delta}}}{\sqrt{n}},
$$
where $B$ is a constant depending on $K_\ell$, $K_f$, $R_W$, $R_X$, $R_Y$, and $M$. More precisely,  

\[
B = 6 K_\ell K_f \exp(K_f R_{\Theta}) \left( R_X + M R_{\Theta} \exp(K_f R_{\Theta}) + R_Y \right).
\]
\end{Theorem}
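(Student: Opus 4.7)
The plan is to mimic the Rademacher-complexity route of \cite{RN4}, specialized to the ODE flow map $F_\theta$. First, I would apply the regression Rademacher bound (Lemma \ref{1000}) to the composed loss class $\{(x,y) \mapsto \ell(F_\theta(x), y) : \theta \in \Theta\}$. Since $\ell$ is $K_\ell$-Lipschitz in its second argument and bounded (by $M$, via Assumption 3 and the a priori bound of the next step), the contraction principle reduces the problem to bounding $\hat{\mathcal{R}}_n(\mathcal{F}_\Theta)$. The term $3M\sqrt{\log(2/\delta)/(2n)}$ in the statement corresponds directly to the concentration term of Lemma \ref{1000}, and the factor $2K_\ell$ is absorbed into the constant $B$.

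Next, I would establish two Gronwall-based ingredients that translate perturbations of $\theta$ into perturbations of $F_\theta$. An a priori bound $\|H_t\| \leq R_X + M R_\Theta \exp(K_f R_\Theta) =: R_H$ follows from Lemma \ref{5000} using $\|\theta\|_{1,\infty} \leq R_\Theta$ and the $K_f$-Lipschitz property of the $f_i$. For stability in $\theta$, given $\theta, \theta' \in \Theta$ with corresponding flows $H_t, H_t'$, I would write
\[
H_t - H_t' = \int_0^t \sum_{i=1}^m \bigl[(\theta_i(s)-\theta_i'(s)) f_i(H_s) + \theta_i'(s)(f_i(H_s)-f_i(H_s'))\bigr]\, ds,
\]
take norms using $\|f_i(H_s)\| \leq K_f R_H + \|f_i(0)\|$, and apply Gronwall to obtain $\|F_\theta(x) - F_{\theta'}(x)\| \leq C_1 \|\theta - \theta'\|_{1,1}$, where $\|\theta\|_{1,1} := \sum_{i=1}^m \int_0^1 |\theta_i(s)|\,ds$ and $C_1 = C_1(K_f, R_\Theta, R_X, M)$ is again absorbed into $B$.

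Third, I would cover $\mathcal{F}_\Theta$ by a two-stage construction. Partition $[0,1]$ into $N$ equal intervals and replace each $\theta_i \in \Theta$ by its piecewise-constant midpoint approximation $\tilde\theta_i$. The $K_\Theta$-Lipschitz property yields $\|\theta-\tilde\theta\|_{1,1} \leq m K_\Theta/(4N)$, hence $\|F_\theta(x) - F_{\tilde\theta}(x)\| \leq C_1 m K_\Theta/(4N)$. The piecewise-constant parameter space sits in the box $[-R_\Theta, R_\Theta]^{mN}$, which admits an $\epsilon$-cover of cardinality at most $(3 R_\Theta/\epsilon)^{mN}$. Composing gives a cover of $\mathcal{F}_\Theta$ whose $L_2(P_n)$-entropy behaves like $mN \log(C R_\Theta/\tau)$ for $\tau \gtrsim C_1 m K_\Theta/N$. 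I would then apply Dudley's entropy integral (Lemma \ref{01}) with lower cutoff $\epsilon \asymp m K_\Theta/N$ and optimize $N$ to balance the residual discretization error $C_1 m K_\Theta/N$ against the main contribution $R_H \sqrt{m N \log(R_\Theta m n)/n}$ from the entropy integral.

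The main obstacle is this final balancing. The stated bound carries $(m+1)$ rather than $mN$ inside the polynomial factor of the first term, and $n^{-1/4}$ rather than the $n^{-1/3}$ one gets from the naive balance $m K_\Theta/N \asymp \sqrt{mN/n}$. Collapsing $N$ into the $\log(R_\Theta m n)$ factor while retaining the $n^{-1/4}$ rate in the discretization term requires, I expect, either sharpening the entropy estimate for $\Theta$ itself by using a Kolmogorov-type bound for balls of Lipschitz functions (whose intrinsic dimension is $m$ rather than $mN$), or a finer two-stage chaining in which $N$ is tuned $\asymp \sqrt{n K_\Theta}$ so that the smoothness approximation costs exactly $n^{-1/4}$ and the residual covering contributes $N$ only through a $\log$ factor. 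Verifying these cancellations and confirming that the final bound assembles into the specific form of Theorem \ref{thm1} with a single constant $B$ absorbing $K_\ell, K_f, R_W, R_X, R_Y, M$ is where the bulk of the technical work lies.
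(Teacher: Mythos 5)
This statement is not proved in the paper at all: Theorem \ref{thm1} is quoted verbatim from \cite{RN4} in Section \ref{comp1} purely for comparison with the authors' own bound, and no proof of it appears anywhere in the main text or the appendix (the appendix only proves Lemmas \ref{002}, \ref{101}, and \ref{003}). So there is no ``paper's own proof'' to measure your attempt against; the relevant argument lives in Marion's paper. A telltale sign is that the constant $B$ involves quantities ($R_W$, $R_X$, $R_Y$, $M$, $K_f$, $K_\ell$) that are never defined in this paper --- they belong to the cited work's setting.

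Judged on its own terms, your sketch does follow the architecture of the actual proof in \cite{RN4}: an a priori Gronwall bound on $\|H_t\|$, Lipschitz stability of $\theta \mapsto F_\theta(x)$, a piecewise-constant discretization of the $K_\Theta$-Lipschitz parameter functions, and a covering/Rademacher argument. Your own diagnosis of the weak point is accurate, and your second guess is the right one: the naive single-scale balance $mK_\Theta/N \asymp \sqrt{mN/n}$ indeed yields $n^{-1/3}$, not $n^{-1/4}$. The stated rate comes from treating the two scales separately rather than balancing them --- the metric entropy of a ball of $K_\Theta$-Lipschitz, $R_\Theta$-bounded functions behaves like $m K_\Theta/\tau$ at fine scales $\tau$ and like $m\log(R_\Theta/\tau)$ at coarse scales, so integrating $\sqrt{mK_\Theta/\tau}$ over the fine-scale range produces a term proportional to $\sqrt{mK_\Theta\,\epsilon}$, which with the cutoff tied to $1/\sqrt{n}$ gives exactly the $m\sqrt{K_\Theta}\,n^{-1/4}$ term, while the coarse (effectively $(m+1)$-dimensional) part of the cover produces the $\sqrt{(m+1)\log(R_\Theta m n)/n}$ term. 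Your proposal as written would therefore land on a bound of the same flavor but with different exponents unless that refinement is carried out; since the task here is to reproduce a cited theorem rather than to supply a proof the paper omitted by oversight, the honest course is simply to cite \cite{RN4} for the proof, as the authors do.
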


\begin{Remark}
    The bound given in our work is stricter in terms of $n$ for the linear case since it is $\mathcal{O}\left( \frac{1}{n^{1/2}}\right)$ for the term consisting of Lipschitz constant of weights while it is  $\mathcal{O}\left(\frac{1}{ n^{1/4}}\right)$  for the bound given in  \cite{RN4}. Also, bound given in \cite{RN4} does not depend on depth but has a worse dependence on width, our bound depends on depth but does not depend on width.
\end{Remark}

\begin{Theorem} \label{thm2}
\cite{RN3} Let  $G_\psi(z)$ be the dynamic function as neural network
\begin{align}
G_\psi(z) = \sigma \Big( A_q \sigma \Big( A_{q-1} \sigma \big( \cdots \sigma \big( A_1 z + b_1 \big) \big) + b_{q-1} \Big) + b_q \Big), ~~\text{where}
\end{align}

\begin{itemize}
    \item[(i)] The activation function $\sigma$ is \( L_\sigma \)-Lipschitz. This means that for any \( x, y \in \mathbb{R} \):
    \[
    |\sigma(x) - \sigma(y)| \leq L_\sigma |x - y|.
    \]
    Moreover, \( \sigma(0) = 0 \), ensuring that the activation function is centered.
    \item[(ii)] \( A_1, A_2, \dots, A_q \) are weight matrices for each layer of the network, and
    \( b_1, b_2, \dots, b_q \) are bias vectors for each layer of the network.
    \item[(iv)] \( z \) is the input vector to the multi-layer perceptron (MLP).
\end{itemize}
    With probability at least $1 - \delta$, the generalization error $R_D(\hat{f}_D) - R_n(\hat{f}_D)$ is upper bounded by  
\[
24  \frac{M_\Theta^D L_\ell}{\sqrt{2}} 
\sqrt{2pU_1^D + (q - 1)p(p + 1)U_2^D + dp(2 + p)U_3^D} + M_\ell \sqrt{\frac{\log(1/\delta)}{2n}}, ~\text{with }
\]
 
\[
U_1^D := \log(\sqrt{n}C_q K_1^D), \quad 
U_2^D := \log(\sqrt{np}C_q K_2^D), \quad 
U_3^D := \log(\sqrt{ndp}C_q K_2^D),
\]
and $C_q := (8q + 12)$. Here,
$ \|A_h\| \leq B_A, 
    \|b_h\| \leq B_b, 
    \|U\| \leq B_U, 
    \|v\| \leq B_v, 
    \|\Phi\| \leq B_\Phi,$
$K_1^D$ and $K_2^D$ are two discretization and depth-dependent constants equal to  
\[
K_1^D := \max\{B_\Phi M_\Theta^D, B_v C_v\}, \quad 
K_2^D := \max\{B_b C_b^D, B_A C_A^D, B_U C_U\},
\]
where $C_A^D$, $C_b^D$, $C_v$, and $C_U$ are Lipschitz constants.
\begin{align*}
M_\Theta &:= B_\Phi L_\sigma \exp(B_A L_\sigma) q L_x 
\Big( B_U B_x + B_v + \kappa_\Theta(0) L_x \Big),  \\
C_A &:= B_\Phi L_x \exp(L_\sigma B_A) q L_x \times 
\max_{z \in \Omega, 1 \leq i \leq q} C_A^i(z),~~ 
C_b := B_\Phi L_x \exp(L_\sigma B_A) q L_x \times 
\max_{1 \leq i \leq q} C_b^i,  \\
C_U &:= B_\Phi B_x \exp(L_\sigma B_A L_x) L_\sigma,~~ 
C_v := B_\Phi \exp(L_\sigma B_A). 
\end{align*} 
\begin{align*}
\kappa_\Theta(0) = \frac{L_\sigma B_b}{L_\sigma B_A} 
\Big( q - 1 \Big) L_\sigma B_A^{-1},
\end{align*}
which serves as an upper bound for \( \|G_\psi(0)\|_{\text{op}} \), defined as:
\[
\|G_\psi(0)\|_{\text{op}} := \max_{\|u\| = 1} \|G_\psi(u)\|.
\]
\end{Theorem}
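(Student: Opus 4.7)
The plan is to reconstruct the bound via the standard covering-number / Rademacher-complexity route, mirroring the architecture used elsewhere in the paper. First, I apply the regression bound of Lemma \ref{1000} to the class $\mathcal{F}$ of Neural CDE predictors $\hat f_\psi$; with an $L_\ell$-Lipschitz and $M_\ell$-bounded loss, this immediately produces the tail term $M_\ell \sqrt{\log(1/\delta)/(2n)}$ and reduces the problem to bounding the empirical Rademacher complexity $\hat{\mathcal{R}}_n(\mathcal{F})$. The $24/\sqrt{2}$ prefactor in the stated bound is the product of the loss Lipschitz factor $2L_\ell$ and the $12/\sqrt{2}$ Dudley constant of Lemma \ref{01}.

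Second, I establish parameter-Lipschitz estimates for $\hat f_\psi$ in each parameter block, which drive the covering number. Perturbing one of $A_h, b_h, U, v, \Phi$ at a time and writing the resulting variational equation for the CDE flow $d z_t = G_\psi(z_t)\, d \tilde x_t$, I bound the integrand by the layer Lipschitz constant $B_A L_\sigma$ of $G_\psi$ and apply Gronwall (Lemma \ref{5000}) to obtain an $\exp(B_A L_\sigma)$ amplification per step; composing across the $q$ internal MLP layers and then across the $D$ discretization steps of the CDE yields the effective Lipschitz constants $M_\Theta^D, C_A^D, C_b^D, C_U, C_v$ appearing in the statement, with $\kappa_\Theta(0)$ arising as the geometric-series bound on $\|G_\psi(0)\|$ exactly in the spirit of Lemma \ref{002}. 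Using these Lipschitz constants, I cover each parameter ball by a Euclidean grid at scale $\tau/(C_\bullet M_\Theta^D)$: the $p\times p$ matrix $A_h$ contributes $p^2$ log-radius terms, $b_h \in \mathbb{R}^p$ contributes $p$, the input matrix $U \in \mathbb{R}^{p\times d}$ contributes $dp$, and the readout vectors $v, \Phi$ each contribute $p$. Summing over $h = 1,\dots,q$ and bundling the $p^2 + p = p(p+1)$ parameters per hidden layer gives
\[
\log N(\tau, \mathcal{F}, L_2(P_n)) \;\lesssim\; 2p \log\!\tfrac{C_q K_1^D}{\tau} + (q-1)p(p+1)\log\!\tfrac{\sqrt{p}\,C_q K_2^D}{\tau} + dp(2+p)\log\!\tfrac{\sqrt{dp}\,C_q K_2^D}{\tau},
\]
with $C_q = 8q+12$ absorbing the polynomial-in-$q$ diameter of $\mathcal{F}$ in the data-dependent $L_2(P_n)$ metric.

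Third, I substitute this covering estimate into Dudley's integral (Lemma \ref{01}), choose the truncation $\epsilon \sim 1/\sqrt{n}$ so that the surviving logarithms take the canonical form $U_i^D = \log(\sqrt{n}\,C_q K_i^D)$, and apply the $\sqrt{a+b+c}\le \sqrt{a}+\sqrt{b}+\sqrt{c}$ trick inside the integral to obtain
\[
\hat{\mathcal{R}}_n(\mathcal{F}) \;\le\; \frac{M_\Theta^D}{\sqrt{2n}}\sqrt{2p\,U_1^D + (q-1)p(p+1)\,U_2^D + dp(2+p)\,U_3^D};
\]
multiplying by $2L_\ell$ and adding the McDiarmid tail yields the stated inequality. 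The main obstacle is the nested Gronwall bookkeeping: one must track the interaction between the $q$ MLP layers inside $G_\psi$ and the $D$ discretization steps of the CDE so that the exponents on $B_A$, $L_\sigma$, and $q$ assemble into exactly $K_1^D, K_2^D$ without introducing a spurious $D$ factor in the exponent. This step relies crucially on the bounded-variation assumption on the driving path $\tilde x_t$, which keeps the integral $\int \beta(s)\,ds$ in Lemma \ref{5000} controlled by the total variation of $\tilde x$ rather than scaling linearly with the number of discretization steps.
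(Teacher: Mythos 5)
This statement is not proved in the paper at all: Theorem \ref{thm2} is quoted verbatim from \cite{RN3} in the subsection ``Comparison with other bounds'' purely so that the authors can contrast it with their own Theorem \ref{0001}. The appendix contains proofs only of Lemmas \ref{002}, \ref{101}, and \ref{003}; no argument for Theorem \ref{thm2} appears anywhere, so there is no in-paper proof to compare your attempt against. Your outline is a plausible reconstruction of the strategy actually used in the cited reference (Lipschitz continuity of the CDE flow with respect to each parameter block, a parameter-space grid cover, Dudley's entropy integral, and the Rademacher regression bound), and it is consistent with how the surrounding paper describes that work. But as written it is a sketch, not a proof, and several of its key steps are asserted rather than established.

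Concretely, the gaps are: (1) the covering-number display is given only up to an unspecified ``$\lesssim$'', and the claim that $C_q = 8q+12$ ``absorbs the polynomial-in-$q$ diameter'' is exactly the bookkeeping that needs to be done, not a justification of it; (2) the parameter-counting that is supposed to produce the coefficients $2p$, $(q-1)p(p+1)$, and $dp(2+p)$ is not carried out --- your accounting ($v,\Phi$ contributing $p$ each, $U$ contributing $dp$, each hidden layer contributing $p^2+p$) does not obviously reproduce the term $dp(2+p)=2dp+dp^2$, and you never reconcile the two; (3) your final remark, that the bounded-variation assumption on $\tilde x$ prevents any dependence on the number of discretization steps, sits uneasily with the theorem itself, in which every constant carries a superscript $D$ and which the paper explicitly describes as ``dependent on the discretization of time.'' None of these can be checked against the paper, because the paper simply imports the result; to verify or complete your argument you would have to consult \cite{RN3} directly.
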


\begin{Remark}
    In the bound given by \cite{RN3}, if we take the case $x(t)=t$, in which case it is neural ODE, the bound is the same in terms of $n$ but the bound depends on the discretization of time, here it is independent of discretization size, also the bound is simpler in this case as it contains less number of parameters.
\end{Remark}
\section{NUMERICAL ILLUSTRATIONS} \label{sec06}

The experiment presented in Figure \ref{fig01} investigates the impact of varying the number of hidden units in a Neural ODE model on generalization error. Neural ODEs, which model continuous transformations of data using ordinary differential equations, allow for varying model capacity by adjusting the number of hidden units in their ODE block.
The input data, denoted as \( \mathbf{X} \), consists of \( n_{\text{samples}} \) random vectors, each with a dimensionality of \( \text{input\_dim} \), and is drawn from a standard normal distribution:
\[
\mathbf{X} \sim \mathcal{N}(0, 1)_{n_{\text{samples}} \times \text{2}}.
\]
The target values, \( \mathbf{y} \), are generated by applying a sinusoidal transformation to the sum of elements in each input vector, where
\[
y_i = \sin\left(\sum_{j=1}^{2} X_{ij}\right) \quad \text{for all} \quad i \in \{1, 2, \dots, n_{\text{samples}}\}.
\]
This non-linear transformation introduces complexity into the data, while keeping the output values within a bounded range.

By adjusting the number of hidden units in the ODE block, we analyze how the model's capacity affects its generalization ability, defined as the difference in error on unseen test data after training. The results show that as the number of hidden units increases, the generalization error also increases. This observation empirically validates Theorem \ref{0001} , which suggests that the generalization gap is influenced by the norm bound of the network parameters. As the number of hidden units grows, the norm of the weight matrices also increases, amplifying the Lipschitz constant of the transformation and making the model more sensitive to small input perturbations. This heightened sensitivity reduces robustness and leads to a larger generalization gap. Thus, the experiment provides empirical confirmation of the theoretical prediction, demonstrating that increasing model capacity leads to a higher generalization error due to the growing norm bound for weights.
\begin{figure}[H]
    \centering
    \includegraphics[scale=0.45]{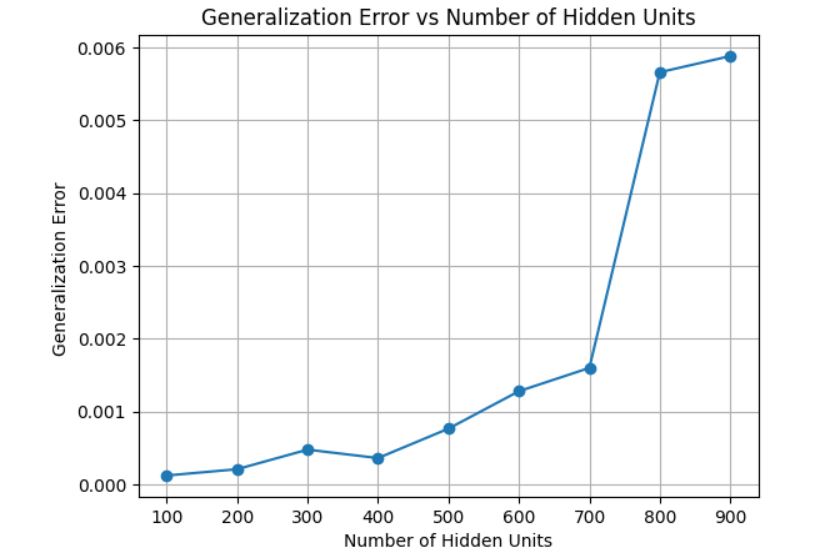}
    \caption{Generalization Error vs Number of Hidden Units in Neural ODE. }
    \label{fig01}
\end{figure}

We conducted experiments on the MNIST and CIFAR-10 datasets to investigate the relationship between the Lipschitz constant of the dynamics function in a Neural ODE and its generalization performance. In both cases, we trained a Neural ODE model in which the feature dynamics are governed by a two-layer fully connected neural network with \texttt{Tanh} activation. The dynamics function $f$ was defined as $f(z, t) = W_2 \cdot \tanh(W_1 z)$, and its Lipschitz constant was approximated by $\|W_2\|_2 \cdot \|W_1\|_2$, where $\|\cdot\|_2$ denotes the spectral norm. At each training epoch, we recorded the model's training and test accuracies, computed the generalization gap (test error minus train error), and measured the Lipschitz constant of the dynamics.

The plots in Figure~\ref{fig02} show that for both MNIST and CIFAR-10, the generalization gap increases with the Lipschitz constant of the dynamics function. This positive correlation is stronger for CIFAR-10, which is a more complex dataset than MNIST. The findings suggest that Neural ODEs with higher Lipschitz constants (i.e., more sensitive dynamics) tend to overfit the training data and thus generalize poorly. This behavior aligns with theoretical expectations from generalization bounds for Lipschitz-continuous functions, which suggest that the generalization gap scales proportionally with the Lipschitz constant:
\[
\text{Gen Gap} \lesssim \|f\|_{\text{Lip}} \cdot \mathcal{O}\left(\frac{1}{\sqrt{n}}\right).
\]

These results highlight the importance of controlling the Lipschitz continuity of the learned dynamics in Neural ODEs to ensure good generalization. Moreover, they provide empirical support for the theoretical notion that models with smoother transformations (lower Lipschitz constants) are less prone to overfitting.

\begin{figure}[h!]
  \centering
  \begin{subfigure}[b]{0.45\textwidth}
    \centering
    \includegraphics[width=\textwidth]{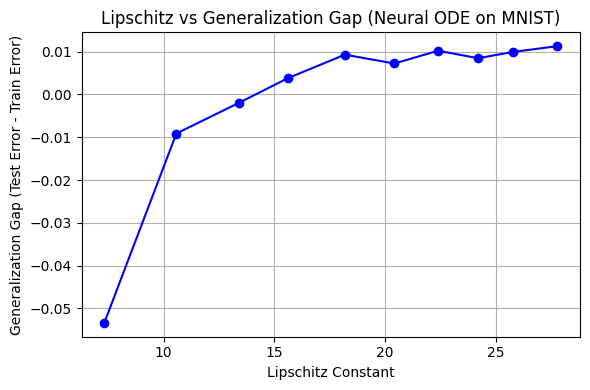}
    \caption{MNIST}
    \label{fig:mnist}
  \end{subfigure}
  \hfill
  \begin{subfigure}[b]{0.45\textwidth}
    \centering
    \includegraphics[width=\textwidth]{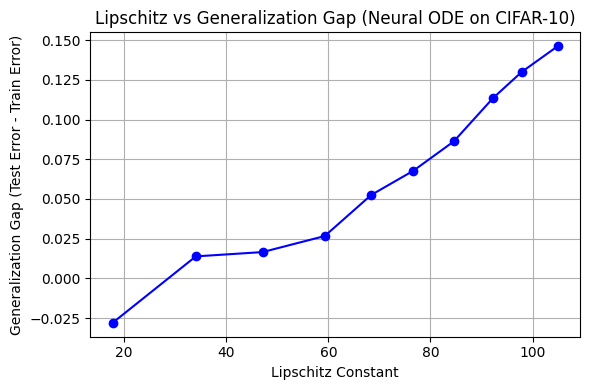}
    \caption{CIFAR-10}
    \label{fig:cifar}
  \end{subfigure}
  \caption{Generalization gap vs Lipschitz constant for Neural ODEs on MNIST and CIFAR-10.}
  \label{fig02}
\end{figure}
The experiment  shown by figure \ref{fig03} is to investigate how the generalization gap is related to the Lipschitz constant of weights $
 \sup_{0 \leq k \leq L-1}  \| W_{k+1} - W_k \| $.The Neural ODE is defined with time-varying weights, where the forward pass involves applying a sinusoidal time dependency to the weights of the hidden layer. The model computes the Lipschitz constant by calculating the largest singular value of the weight matrices, which serves as a measure of how sensitive the model is to input changes.
 Lipschitz constant of weights is added as a penalty term in the loss function with different regularization parameters ($\lambda$ )values.  The results are summarized in a box plot, showing the generalization gap versus $\lambda$, to visualize the impact of varying the penalization factor on the model's generalization performance. As we increase the value of the penalization factor the average generalization gap decreases which is possible only if  Lipschitz constant of weights decreases which indicates models with less  Lipschitz constant of weights have less generalization gap. So, the Lipschitz constant of weights should be in the numerator, which is confirmed by theoretical bound. 
\begin{figure}[H]
    \centering
    \includegraphics[width=0.5\linewidth]{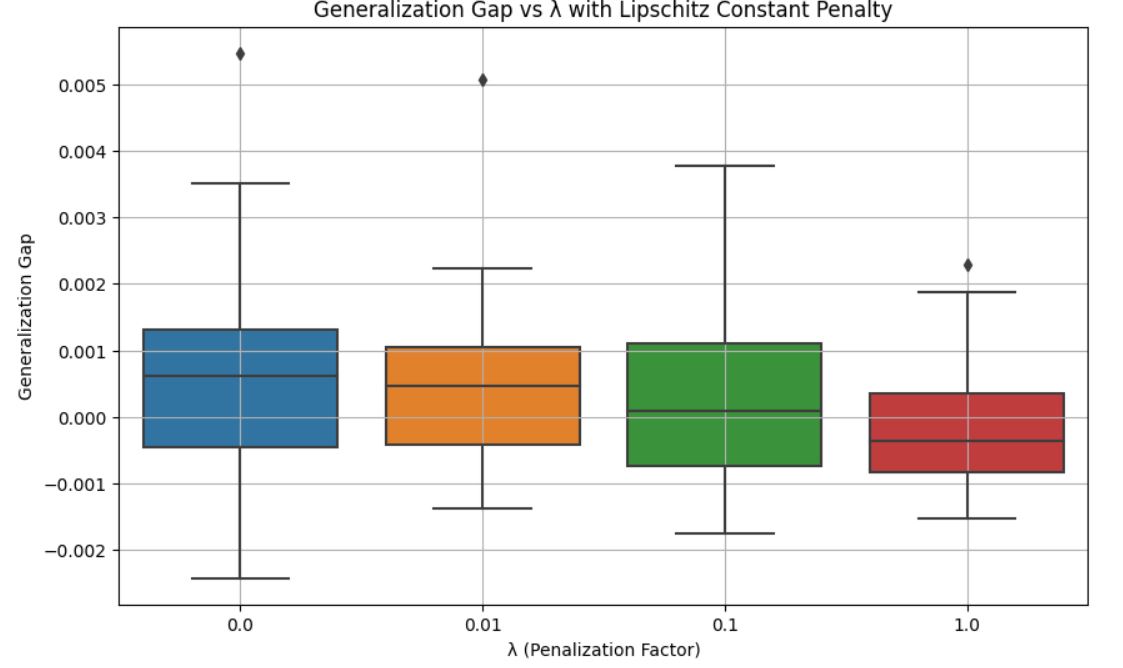}
    \caption{Plot of generalization gap against regularization parameter for time dependent Neural ODE. Lipschitz constant for weights which is the bound of solution is added as a penalty term to the loss function. Four different $\lambda$ values (0, 0.01, 0.1, and 1) are tested over 20 trials. For each trial, the generalization gap is calculated as the difference between the validation loss and training loss after training for 50 epochs.}
    \label{fig03}
\end{figure}

\section{CONCLUSION} \label{sec07}
 In this work, we present the first generalization bounds for both time-independent and time-dependent neural ODEs. Specifically, we derived generalization bounds for time-dependent neural ODEs of the form
\[
dz(t) = f(z(t), t, \theta(t)) \, dt,
\]
and, leveraging this result, we also obtained the corresponding generalization bounds for time-independent neural ODEs. One of the key insights of our analysis is the dependence of the generalization gap on the Lipschitz constant of the weights in the case of time-dependent neural ODEs. This relationship highlights the crucial role of the model's smoothness and stability in its ability to generalize to unseen data.

We recognize that stochastic neural ODEs, which have been shown to emerge as the deep limit of a wide class of residual neural networks, provide a natural extension to our results. Exploring generalization bounds for Neural Stochastic Differential Equations (Neural SDEs) presents an exciting avenue for future work, as it would further deepen our understanding of the connection between stochastic dynamics and generalization in deep learning models.

Another promising direction is the application of the mean-field approach, which could offer tighter and more accurate estimates for generalization bounds in both time-independent and time-dependent settings. By extending our analysis to these more complex models, we hope to contribute to a more comprehensive theory of generalization for neural ODEs and their stochastic counterparts.


\section*{Appendix} \textbf{Organization of the Appendix:} Section A in appendix provides the proofs for three lemmas which we used to prove the main theorem. We proved lemma \ref{002}, \ref{101}, and \ref{003} in this section.  Section B is devoted to details related to numerical experiments. 
\appendix
\section{Proofs} 
\subsection{Proof of bounded variation solutions:} \label{bdd1}

Assume \( f(t,z) \) is Lipschitz in \( z \) with constant \( L \), i.e.,
\[
|f(t,z_1) - f(t,z_2)| \leq L |z_1 - z_2|.
\]
Assume \( f(t,z) \) is continuous in \( t \) on the compact rectangle \( [0,1] \times [a,b] \). By continuity on a compact set, \( f(t,z) \) is bounded:
\[
|f(t,z)| \leq M \quad \text{for some } M > 0.
\]

The solution \( z(t) \) satisfies
\[
\frac{dz}{dt} = f(t,z) \quad \text{with } z(0) = z_0.
\]
By the Picard-Lindelöf theorem, \( z(t) \) exists uniquely and is absolutely continuous.

Since
\[
\left|\frac{dz}{dt}\right| = |f(t,z(t))| \leq M,
\]
the solution \( z(t) \) is Lipschitz continuous with constant \( M \):
\[
|z(t_i) - z(t_{i-1})| \leq M |t_i - t_{i-1}| \quad \forall t_i, t_{i-1} \in [0, 1].
\]

For any partition \( 0 = t_0 < t_1 < \dots < t_n = 1 \), apply the Mean Value Theorem (MVT) to \( z(t) \):
\[
z(t_i) - z(t_{i-1}) = \frac{dz}{dt}(c_i) \cdot (t_i - t_{i-1}) \quad \text{for some } c_i \in (t_{i-1}, t_i).
\]
Summing the absolute differences gives the total variation:
\[
\sum_{i=1}^{n} |z(t_i) - z(t_{i-1})| = \sum_{i=1}^{n} |f(c_i, z(c_i))| \cdot (t_i - t_{i-1}).
\]
Since \( |f(c_i, z(c_i))| \leq M \), the total variation is bounded by:
\[
V_{0}^{1}(z) \leq M \sum_{i=1}^{n} (t_i - t_{i-1}) = M \cdot (1 - 0) = M < \infty.
\]

The solution \( z(t) \) has finite total variation \( V_{0}^{1}(z) \leq M \), so it is of bounded variation. The Lipschitz continuity of \( f(t,z) \) in \( z \), combined with its boundedness due to continuity in \( t \), ensures the result.

\subsection{Proof of lemma \ref{002}}
\begin{proof}
Let \( A(t) \) be a time-dependent matrix. We assume that \( A(t) \) is \textit{Lipschitz continuous}, meaning there exists a constant \( L_A \) such that for all \( t_1, t_2 \in [t_0, t_f] \):

\[
\| A(t_1) - A(t_2) \| \leq L_A |t_1 - t_2|
\]

where \( L_A \) is the \textit{Lipschitz constant} and \( \| \cdot \| \) is a suitable matrix norm (e.g., Frobenius norm or operator norm).

To express \( A(t) \) as a function of its initial value \( A(t_0) \), we use the integral representation:

\[
A(t) = A(t_0) + \int_{t_0}^{t} \frac{dA(s)}{ds} \, ds
\]

where \( \frac{dA(s)}{ds} \) is the time derivative of \( A(s) \), and the integral captures the accumulation of changes over time.

Using the assumption that \( A(t) \) is Lipschitz continuous, the time derivative \( \frac{dA(s)}{ds} \) is bounded by the Lipschitz constant \( L_A \). Therefore, for \( s \in [t_0, t_f] \), we have:

\[
\left\| \frac{dA(s)}{ds} \right\| \leq L_A
\]

Substituting this bound into the integral representation of \( A(t) \):

\[
\| A(t) - A(t_0) \| \leq \int_{t_0}^{t} \left\| \frac{dA(s)}{ds} \right\| ds \leq \int_{t_0}^{t} L_A \, ds
\]

This simplifies to:

\[
\| A(t) - A(t_0) \| \leq L_A |t - t_0|
\]

Thus, we have the bound:

\[
\| A(t) \| \leq \| A(t_0) \| + L_A |t - t_0|
\]

Finally, to remove the time dependency, we maximize the bound over the interval \( [t_0, t_f] \):

\[
\| A(t) \| \leq \| A(t_0) \| + L_A (t_f - t_0)
\]

Thus, the matrix \( A(t) \) is uniformly bounded by a time-independent constant \( M_A \):
\[
 \| A(t) \| \leq M_A = \| A(t_0) \| + L_A (t_f - t_0)   
\]
Since t $\in$ [0, L],
\begin{equation} \label{9000}
M_A = \| A(0) \| + L_A L
\end{equation}

Let 
\begin{eqnarray*}
 f_{N}(z(t)) && \\ &:=& \sigma\left( A_N(t) \sigma\left( A_{N-1}(t) \sigma\left( \dots \sigma\left( A_1(t) z + b_1(t) \right)\right) \right. \right.  \\ && \left. \left.  + b_{N-1}(t) \right) +b_N(t)\right). 
\end{eqnarray*}
where $z \in \mathbb{R}^{d}$.\\
Let us first consider the case when d=1. \\
Let $Dz$ be the distributional derivative of solution  function  $z$ and 
$$ \mathcal{I} = \{ z \in L^{1}([0,L])~|~ z~ \text{is~non~decreasing} \} $$
$$ \mathcal{B} = \{ z \in L^{1}([0,L])~|~ |Dz|((0,L)) \le M \}.$$

We know that finding solution to neural ODE (\ref{002}) is equivalent to finding solution to the integral equation
\begin{align}
z(t) &= z(0) + \int_0^t f(z(t), t, \theta(t) \, dt. 
\end{align}

Taking norms, this yields:
\begin{align}
\|z(t)\| &\leq \|z(0)\| + \int_0^t \|f(z(t), t, \theta(t))\| \, dt. 
\end{align}

Notice that since   we assumed $f$ is Lipschitz with respect to $z$, we have that for all \( z \in \mathbb{R}^{d} \):
\begin{align}
\|f(z(t), t, \theta(t))\| &\leq \|f(z(t), t, \theta(t)) - f(0,t,\theta(t))\| + \|f(0,t,\theta(t))\|  \\
&\leq \|f(z(t), t, \theta(t)) - f(0,t,\theta(t))\| + \|f(0,t,\theta(t))\|  \\
&\leq L_f \|z(t)\| + \|f(0,t,\theta(t))\|
\end{align}
where the last inequality follows from the fact that $f$ is Lipschitz. It follows that:
\begin{align}
\|z(t)\| &\leq \|z(0)\| + \int_0^t \left(L_f\|z\| + \|f(0,t,\theta(t))\|\right) dt.
\end{align}

Using the fact that 
$\int_{0}^{t} \, dt = t$, 
one gets:
\begin{align}
\|z(t)\| \leq \|z(0)\| + t\|f(0,t,\theta(t)\| + L_f \int_0^t \|z(t)\| \, dt
\end{align}
Applying Gronwall's inequality stated in Lemma \ref{5000}  yields,

\begin{align}
\|z(t)\| \leq \left(\|z(0)\| + t\|f(0,t,\theta(t))\|\right)
\exp\left(tL_f\right).
\end{align}

Let  $\| A_i(0) \| \leq B_{A_0}$ and $\| b_i(0) \| \leq B_{b_0}.$

Then using equation \ref{9000} we get,
 $$\| A_i(t) \| \leq  \| A_i(0) \| + L_A L \leq     B_{A_0} + L_AL =\mathcal{A}$$ and  $\| b_i(t) \| \leq \| b_i(0) \| + L_b L \leq  B_{b_0}  + L_b L= \mathbf{B}.$ 

Since
\begin{align}
\| f_{N}(0) \| & = \| f_{N}(0) - \sigma(0) \| \leq L_{\sigma} \| A_N(t) f_{N-1}(0) \| + L_{\sigma} \mathbf{B}, 
\end{align}

\begin{align}
& \leq L_{\sigma} \mathcal{A} \| f_{N-1}(0) \| + L_{\sigma} \mathbf{B}.   
\end{align}

Using lemma \ref{100},
\begin{align}
\| f_{N}(0) \| & \leq \left( L_{\sigma} \mathcal{A} \right)^{N-1} \| \sigma(b_1) \| + L_{\sigma} \mathbf{B} \sum_{j=0}^{N-2} \left( L_{\sigma} \mathcal{A} \right)^j, 
\end{align}

\begin{align}
& \leq L_{\sigma} \mathbf{B} \sum_{j=0}^{N-1} \left( L_{\sigma} \mathcal{A} \right)^j, 
\end{align}

\begin{align}
& = L_{\sigma} \mathbf{B} \frac{\left( L_{\sigma} \mathcal{A}\right)^N - 1}{L_{\sigma} \mathcal{A} - 1}.
\end{align}

This implies
\begin{align}
\|z(t)\| \leq \left(\|z(0)\| + tL_\sigma \mathbf{B} \frac{(L_\sigma \mathcal{A}^N - 1}{L_\sigma \mathcal{A}} \right) 
\exp\left(tL_f\right).
\end{align}
\end{proof}
\subsection{Proof of lemma \ref{101}}
\begin{proof}
For a fixed positive integer $n$, let us set the discretization size as $\Delta x = \frac{L}{n}, ~ \Delta y = \frac{V}{n}$.
To each \( z \in \mathcal{I} \), we associate the pair of functions \((\psi^{+}[z], \psi^{-}[z])\) defined by

   \begin{equation}
\psi^{\substack{-\\+}}[z] = \sum_{k=0}^{N-1} \psi^{\substack{-\\ +}}_k \cdot \textbf{I}[k \cdot \Delta x, (k+1) \cdot \Delta x],
\end{equation}

where
\begin{eqnarray*}
\psi^{-}_k &=& \left[ \frac{z(k \cdot \Delta x + 0)}{\Delta y} \right], \\ \quad \psi^{+}_k &=& \left[ \frac{z((k + 1) \cdot \Delta x - 0)}{\Delta y} \right] + 1.
\end{eqnarray*}

For \(\mathcal{X}^{\substack{-\\+}} \in \mathcal{I}\), define
\[
U(\mathcal{X}^{-}, \mathcal{X}^{+}) = \{z \in \mathcal{I} \mid \mathcal{X}^{-} \leq z \leq \mathcal{X}^{+}\}. 
\]
Since \(z \in U(\mathcal{X}^{-}[z], \mathcal{X}^{+}[z])\), the set
\[
\mathcal{U} = \{ U(\mathcal{X}^{-}[z], \mathcal{X}^{+}[z]) \mid f \in \mathcal{I} \}
\] is a covering of $\mathcal{I}$.

Since $$\# \mathcal{U} \leq \{0 \leq a_0 \leq a_1 \leq \dots \leq a_{N-1} \leq N\mid  \left( a_k \in \mathbb{N} \right)\}^2$$

and 
\begin{eqnarray*}
&& \mathcal{\#}\{0 \leq a_0 \leq a_1 \leq \dots \leq a_{N-1} \leq N\mid  \left( a_k \in \mathbb{N} \right)\} \\
 &=& \left\{ (p_1, \dots, p_{N+1}) \in \mathbb{N}^{N+1} \mid p_1 + \dots + p_{N+1} = N \right\} \\  &=& \binom{2N}{N},
\end{eqnarray*}
 the covering number for the class of functions in $\mathcal{I}$ is bounded by $\binom{2n}{n}^2$. 
Consider sums of powers of binomial coefficients:
$a_n^{r} = \sum_{k=0}^{n} \binom{n}{k}^r.$
For \( r = 2 \), the closed-form solution is given by

\[
a_n^{(2)} = \binom{2n}{n}
\]

i.e., the central binomial coefficients. \( a_n^{(2)} \) obeys the recurrence relation

\[
(n+1) a_{n+1}^{(2)} - (4n+2) a_n^{(2)} = 0.
\]
After solving the recurrence relation we get,
\begin{align*}
\binom{2n}{n} &= C_1 \frac{4^{n-1}}{\Gamma(n+1)} \left(\frac{3}{2}\right)_{2n-1}\\ & (\text{$(x)_n$ denotes Pochhammer symbol.}) \label{012} \\
&= 2 \cdot \frac{2^{2(n-1)}}{\Gamma(n+1)} \left(\frac{3}{2}\right)_{2n-1} \\& (\text{since }  C_1=2 ,\text{which we can}\\ & \text{obtain by setting $n=0$ in previous equation.}) \\
&=  \frac{2^{2(n-1)}}{\Gamma(n+1)} \frac{\Gamma(\frac{3}{2}+ n-1)}{\Gamma(\frac{3}{2})} \\
&= \frac{2^{2(n-1)}}{\Gamma(n+1)} \frac{\Gamma(n+ \frac{1}{2})}{  \sqrt{\frac{\pi}{2}}} \\
&= \frac{2^{2(n-1)}}{\sqrt{\frac{\pi}{2}}} \frac{\Gamma(n+ \frac{1}{2})}{\Gamma(n+1)} \\
&= \frac{2^{2n}}{\sqrt{\pi}} \frac{\Gamma(n+ \frac{1}{2})}{\Gamma(n+1)}\\
&\leq \frac{2^{2n}}{\sqrt{\pi}} \frac{1}{\sqrt{n}} (\text{using Lemma } \ref{02})\\
&= \frac{2^{2n}}{\sqrt{n\pi}}. 
\end{align*} 

\begin{align*}
\implies
 \binom{2n}{n}^{2} &\leq \frac{2^{4n}}{n\pi} \\
&\leq \frac{2^{4n}}{6\pi} (\text{if } n \geq 6) \\
&\leq \frac{2^{4n}}{18}.
\end{align*} 
Let $n = \left[\frac{LV}{\tau}\right]+1$, then 
$$ N(\tau ,\mathcal{I}, L_2(P_n)) \leq \frac{2^{4\frac{LV}{\tau}}}{18}.  $$
\end{proof}
\subsection{Proof of lemma \ref{003}}

\begin{proof}

Since,
    $$N(\tau ,\mathcal{B}', L_2(P_n)) \le \frac{2^{16\frac{LV}{\tau}}}{324}.$$
    For $z \in \mathbb{R}^d$ ,
    $$N(\tau ,\mathcal{B}', L_2(P_n)) \le \left(\frac{2^{16\frac{LV\sqrt{d}}{\tau}}}{324}\right)^d$$

Observe that,
$$ \sqrt{\log{N(\tau ,\mathcal{B}', L_2(P_n))}} \le \frac{4 \sqrt{LVd^{\frac{3}{2}} \log 2}}{ \sqrt{\tau}} = g(\tau).$$ Therefore,
\begin{equation} \label{011}
 \int_{a}^{b} g(\tau) d \tau = 8 \sqrt{LVd^{\frac{3}{2}} \log 2} \left[\sqrt{b} - \sqrt{a}\right]
 \end{equation}
We know that  from Lemma (\ref{01}) that empirical Rademacher Complexity $\hat{\mathcal{R}}_{n}(\mathcal{B}')$ has the following bound
$$ \hat{\mathcal{R}}_{n}(\mathcal{B}') \le \inf_{\epsilon \ge 0} \left\{ 4 \epsilon + 12 \int_{\epsilon}^{b}  \sqrt{\frac{\log{N(\tau ,\mathcal{B}', L_2(P_n))}}{n}} d \tau \right\}, $$
where b = ${\sup_{f \in \mathcal{B}'} \sqrt{\mathbb{E}[f^2]}}$.
Using (\ref{011}), we have
$$ \hat{\mathcal{R}}_{n}(\mathcal{B}') \le \inf_{\epsilon \ge 0} \left\{ 4 \epsilon + \frac{96 \sqrt{LVd^{\frac{3}{2}} \log 2}}{\sqrt{n}} \left[ \sqrt{b} - \sqrt{\epsilon} \right] \right\}. $$
This implies
$$ \hat{\mathcal{R}}_{n}(\mathcal{B}') \le  96\frac{ \sqrt{bLVd^{\frac{3}{2}}\log{2}}}{\sqrt{n}} - 576\frac{LV d^{\frac{3}{2}}\log{2}}{n}.$$
\end{proof}

\section{Experiment details}

\subsection{For experiment illustrated by figure \ref{fig01}}
The objective of this experiment is to analyze the effect of the number of hidden units on the generalization error of a Neural ODE model. The generalization error is defined as the model's performance on unseen test data, measured using the mean squared error (MSE). The study investigates the relationship between model complexity, as determined by the number of hidden units, and its ability to generalize.

The dataset is synthetically generated and consists of training and testing samples. The training set comprises 100 samples, while the test set includes 30 samples. Each input sample has two features, sampled from a standard normal distribution. The target values are computed using a non-linear function of the inputs  with some added randomness. This introduces a non-linear relationship between inputs and targets, mimicking the challenges of real-world data.

The Neural ODE model used in this experiment consists of three main components. First, a linear input layer maps the input data into a higher-dimensional space determined by the number of hidden units. Second, the ODE function models the dynamics of the hidden state using a fully connected layer with ReLU activation, solving the ODE using the `torchdiffeq.odeint` solver over the time interval \([0.0, 1.0]\). The final state of the ODE solver is passed through an output layer to produce the scalar prediction.

The independent variable in this study is the number of hidden units, which is varied across the following values: \([100, 200, 300, 400, 500, 600, 700, 800, 900]\). For each configuration, the model is trained for 100 epochs using the Adam optimizer with a learning rate of 0.01. The loss function used is the mean squared error (MSE), and the training process is conducted on a GPU if available. The dependent variable is the generalization error, which is evaluated as the mean squared error on the test dataset.

Reproducibility is ensured by setting random seeds for both \texttt{torch} and \texttt{numpy}. The model performance is evaluated by calculating the MSE on the training and test datasets after training. The generalization error is analyzed as a function of the number of hidden units, and a line plot is generated to visualize this relationship. The x-axis represents the number of hidden units, and the y-axis represents the corresponding generalization error.

The hypothesis of the experiment is that increasing the number of hidden units will initially reduce the generalization error as the model's capacity improves. However, beyond a certain point, overfitting may occur, leading to an increase in the generalization error. The experiment is designed to identify this trend and explore the optimal model complexity for the given task.

\subsubsection{Data Generation}\label{5000}

The dataset used in this experiment is synthetically generated to test the Neural ODE model’s capability to generalize to unseen data. The process creates input-output pairs based on random input vectors and a non-linear transformation for the target values. This ensures that the task is sufficiently challenging while allowing for reproducibility.

The steps for generating the data are as follows:
\begin{enumerate}
    \item The input data, denoted as \( X \), is a set of \( n_{\text{samples}} \) random vectors, where each vector has a dimensionality of \( \text{input\_dim} \). The elements of \( X \) are drawn from a standard normal distribution:
    \[
    X \sim \mathcal{N}(0, 1)^{n_{\text{samples}} \times \text{2}}.
    \]
    \item The target values, denoted as \( y \), are generated by applying a sinusoidal transformation to the sum of the elements in each input vector:
    \[
    y_i = \sin\left(\sum_{j=1}^{\text{2}} X_{ij}\right), \quad \forall i \in \{1, 2, \dots, n_{\text{samples}}\}.
    \]
    This non-linear transformation introduces complexity into the data while ensuring a bounded range for the output values.
    \item The inputs \( X \) and the corresponding targets \( y \) are paired together to form the dataset:
    \[
    \text{Dataset} = \{(X_i, y_i)\}_{i=1}^{n_{\text{samples}}}.
    \]
    \item Two datasets are generated:
    \begin{itemize}
        \item A training dataset with \( n_{\text{samples}} = 100 \).
        \item A testing dataset with \( n_{\text{samples}} = 30 \).
    \end{itemize}
    Both datasets are created independently using the same generation process to ensure the test data remains unseen during training.
    \item The generated data is stored as PyTorch tensors, making it compatible with the Neural ODE model. This enables efficient data loading and processing during training and evaluation.
\end{enumerate}

This synthetic data generation process provides a controlled setup for evaluating the generalization capabilities of the Neural ODE model. The use of a sinusoidal target function introduces a non-trivial learning problem while maintaining interpretability and ease of reproducibility.

\subsection{For experiment illustrated by figure \ref{fig02}}
We conducted experiments on both the MNIST and CIFAR-10 datasets to investigate the relationship between the Lipschitz constant of the dynamics function in a Neural ODE and its generalization gap. The MNIST dataset consists of 60{,}000 training and 10{,}000 testing grayscale images of handwritten digits, each of size $28 \times 28$. The CIFAR-10 dataset includes 50{,}000 training and 10{,}000 testing color images of size $32 \times 32$ with three RGB channels.

For both datasets, we used a Neural ODE model in which the hidden state evolves according to a learned ordinary differential equation (ODE). The ODE's right-hand side function $f(z, t)$ is implemented as a two-layer fully connected neural network with \texttt{Tanh} activation. The solution at time $t = 1$ is passed through a linear classifier to produce class logits. The Lipschitz constant of $f$ was estimated by computing an upper bound given by the product of the spectral norms (operator $L_2$ norms) of the two weight matrices, i.e., $\|W_2\|_2 \cdot \|W_1\|_2$.

Training was performed for 10 epochs using the Adam optimizer with a learning rate of $1 \times 10^{-3}$ and cross-entropy loss. A batch size of 128 was used for both training and evaluation. At each epoch, we measured the training and test accuracies, computed the generalization gap (defined as test error minus training error), and recorded the Lipschitz constant. To ensure reproducibility, all sources of randomness were controlled by fixing random seeds. Finally, we plotted the generalization gap against the Lipschitz constant. In both datasets, we observed a consistent empirical trend showing that larger Lipschitz constants correspond to higher generalization gaps, suggesting that increased sensitivity in the learned dynamics function may degrade generalization performance.

\subsection{For experiment illustrated by figure \ref{fig03}}
This experiment investigates the impact of Lipschitz regularization on the generalization gap in Neural Ordinary Differential Equation (ODE) models. A Neural ODE model is implemented where the parameters of the ODE depend on time. The primary goal is to examine how adding a penalty term proportional to the Lipschitz constant of the model's weights influences the generalization gap, which is defined as the difference between validation loss and training loss.

The Neural ODE model consists of an ODE function with two fully connected layers. The first layer maps 2-dimensional input data to a hidden representation of size 50 with ReLU activation. The second layer projects this representation back to a 2-dimensional output. To incorporate time-dependency, the hidden layer's output is modulated by a sine function of time, introducing a dynamic weight adjustment. The ODE is solved using the \texttt{odeint} function from the \texttt{torchdiffeq} library over a fixed time interval of $[0, 1]$.

To measure and regulate the Lipschitz constant of the model, the singular values of the weight matrices are computed. The Lipschitz constant is defined as the maximum singular value across all weight matrices. During training, the loss function combines the mean squared error (MSE) between model predictions and ground truth labels with a penalty term proportional to the Lipschitz constant. The overall loss is expressed as:
\[
\text{Loss} = \text{MSE} + \lambda \cdot L,
\]
where $\lambda$ is the regularization strength, and $L$ is the Lipschitz constant of weights.

The datasets for training and validation are synthetically generated. Both datasets consist of 2-dimensional samples drawn from a standard normal distribution, $\mathcal{N}(0, 1)$. The training dataset contains 100 samples, while the validation dataset contains 20 samples. The corresponding labels are generated by scaling the input data by a factor of 2, resulting in a simple linear relationship. This ensures a clear evaluation of the model's generalization capabilities.

\subsubsection{Data Generation}

In this experiment, the input data and corresponding labels are synthetically generated to evaluate the generalization capability of a neural ODE model with a Lipschitz constant penalty. The input data consists of random 2-dimensional points, generated independently from a standard normal distribution. Specifically, for each input data point \( x = (x_1, x_2) \), both features \( x_1 \) and \( x_2 \) are independently drawn from the standard normal distribution \( \mathcal{N}(0, 1) \). This ensures that the dataset contains diverse points distributed across the 2-dimensional space. The dataset used for training consists of 100 such points, and the dataset used for validation consists of 20 points.

The corresponding labels for the input data are generated by a simple linear transformation. The label for each data point \( x = (x_1, x_2) \) is computed as twice the value of the input features, i.e., \( y = 2 \cdot x \). This linear transformation ensures that the label is directly related to the input data, which makes it easier for the model to learn the mapping. The training labels \( y_{\text{train}} \) and validation labels \( y_{\text{val}} \) are computed as \( y_{\text{train}} = 2 \cdot x_{\text{train}} \) and \( y_{\text{val}} = 2 \cdot x_{\text{val}} \), respectively.

The dataset is randomly split into training and validation datasets. The training dataset consists of 100 data points, and the validation dataset contains 20 data points. This splitting is done to ensure that the model is evaluated on unseen data, allowing for the measurement of its generalization performance.

To summarize, the input data is generated by independently sampling 2-dimensional points from a standard normal distribution, ensuring a variety of input values. The corresponding labels are generated through a simple linear scaling by a factor of 2. The dataset is split into training and validation sets, with 100 samples for training and 20 samples for validation. This dataset setup serves to evaluate the performance of a neural ODE model with a Lipschitz penalty term.

\end{document}